\documentclass[acmsmall,screen,nonacm]{acmart}

\setcopyright{none}
\makeatletter
\@ifundefined{hyxmp@parse@acmart}{}{\let\hyxmp@parse@acmart\relax}
\makeatother

\usepackage{amsmath}
\usepackage{mathtools}
\usepackage{booktabs}
\usepackage{tabularx}
\usepackage{longtable}
\usepackage{array}
\usepackage{multirow}
\usepackage{enumitem}
\usepackage{graphicx}
\usepackage{xcolor}
\usepackage{tikz}
\usepackage{url}
\usepackage{microtype}
\usepackage{algorithm}
\usepackage{algpseudocode}
\usepackage{placeins}
\usetikzlibrary{arrows.meta,positioning,fit,calc,shapes.geometric,backgrounds}

\hypersetup{
  hidelinks,
  keeppdfinfo=true,
  pdftitle={Runtime Authorization of Self-Generated Subgoals in Long-Horizon Tool-Using AI Agents},
  pdfauthor={Genliang Zhu; Chu Wang},
  pdfsubject={Runtime authorization of self-generated subgoals in long-horizon tool-using AI agents},
  pdfcreator={LaTeX}
}

\setlist[itemize]{leftmargin=1.25em}
\setlist[enumerate]{leftmargin=1.45em}
\newcolumntype{L}[1]{>{\raggedright\arraybackslash}p{#1}}
\newcolumntype{Y}{>{\raggedright\arraybackslash}X}

\definecolor{gtBlue}{HTML}{1D4ED8}
\definecolor{gtBlueFill}{HTML}{EFF6FF}
\definecolor{gtGreen}{HTML}{15803D}
\definecolor{gtGreenFill}{HTML}{F0FDF4}
\definecolor{gtAmber}{HTML}{B45309}
\definecolor{gtAmberFill}{HTML}{FFFBEB}
\definecolor{gtPurple}{HTML}{6D28D9}
\definecolor{gtPurpleFill}{HTML}{F5F3FF}
\definecolor{gtRed}{HTML}{B91C1C}
\definecolor{gtRedFill}{HTML}{FEF2F2}

\newtheorem{definition}{Definition}
\newtheorem{lemma}{Lemma}
\newtheorem{theorem}{Theorem}
\newtheorem{proposition}{Proposition}

\newcommand{\mcode}[1]{\ifmmode\text{\normalfont\ttfamily #1}\else{\normalfont\ttfamily #1}\fi}
\newcommand{\Effects}{\ensuremath{\mathbb{E}}}

\newcommand{\Allow}{\textsc{Allow}}
\newcommand{\Deny}{\textsc{Deny}}
\newcommand{\Indeterminate}{\textsc{Indeterminate}}
\newcommand{\Reauthorize}{\textsc{Reauthorize}}

\newcommand{\GoalTransitionContextualTemplateCount}{48}
\newcommand{\GoalTransitionStructuralSchemaCount}{25}
\newcommand{\GoalTransitionMatchedCaseCount}{96}
\newcommand{\GoalTransitionBenignCaseCount}{48}
\newcommand{\GoalTransitionDriftedCaseCount}{48}

\newcommand{\GoalTransitionDecisionCount}{576}

\newcommand{\GoalTransitionCompleteBenignCompletionCount}{48}
\newcommand{\GoalTransitionMemorylessUnsafeCommitCount}{33}
\newcommand{\GoalTransitionMemorylessDriftedCaseCount}{48}
\newcommand{\GoalTransitionHistoryPreventiveBlockCount}{37}
\newcommand{\GoalTransitionHistoryDriftedCommittedCount}{11}
\newcommand{\GoalTransitionCompleteDeniedCount}{43}
\newcommand{\GoalTransitionCompleteReauthorizationCount}{5}
\newcommand{\GoalTransitionBlockedBeforeEffectCount}{38}
\newcommand{\GoalTransitionInvalidOrderAfterSafePrefixCount}{5}
\newcommand{\GoalTransitionCheckerMutantCount}{2692}
\newcommand{\GoalTransitionResealedReceiptMutantCount}{1631}
\newcommand{\GoalTransitionSemanticallyReplayedReceiptCount}{867}
\newcommand{\GoalTransitionProtocolCheckCount}{65}
\newcommand{\GoalTransitionProtocolPositiveCheckCount}{14}
\newcommand{\GoalTransitionProtocolNegativeCheckCount}{51}
\newcommand{\GoalTransitionProtocolGateCount}{43}
\newcommand{\GoalTransitionCheckerMutationFamilyCount}{48}
\newcommand{\GoalTransitionIndependentProfileLimitMutationCount}{4}

\newcommand{\GoalTransitionRuntimeCaseCount}{32}
\newcommand{\GoalTransitionRuntimeDispatchCount}{258}
\newcommand{\GoalTransitionRuntimeDecisionMatchCount}{32}
\newcommand{\GoalTransitionRuntimeReceiptChainCount}{32}
\newcommand{\GoalTransitionCrossRuntimeRecordCount}{16}
\newcommand{\GoalTransitionCrossRuntimeStateCount}{16}
\newcommand{\GoalTransitionPerformanceSchemaCount}{25}
\newcommand{\GoalTransitionPerformanceSampleCount}{30000}
\newcommand{\GoalTransitionBoundedStateCount}{340}
\newcommand{\GoalTransitionBoundedTransitionCount}{419}
\newcommand{\GoalTransitionTransitionMedianMicroseconds}{83.584}
\newcommand{\GoalTransitionTransitionPercentileNinetyFiveMicroseconds}{131.000}
\newcommand{\GoalTransitionTransitionPercentileNinetyNineMicroseconds}{225.166}

\newcommand{\GoalTransitionEffectAuthorizationMedianMicroseconds}{22.292}
\newcommand{\GoalTransitionEffectAuthorizationPercentileNinetyFiveMicroseconds}{31.292}
\newcommand{\GoalTransitionEffectAuthorizationPercentileNinetyNineMicroseconds}{57.167}
\newcommand{\GoalTransitionEffectCommitMedianMicroseconds}{42.375}
\newcommand{\GoalTransitionEffectCommitPercentileNinetyFiveMicroseconds}{60.250}
\newcommand{\GoalTransitionEffectCommitPercentileNinetyNineMicroseconds}{147.000}
\newcommand{\GoalTransitionTransitionReceiptBytes}{1348}

\newcommand{\GoalTransitionEffectAuthorizationReceiptBytes}{847}
\newcommand{\GoalTransitionEffectCommitReceiptBytes}{1673}
\newcommand{\GoalTransitionRetainedStateAfterTransitionBytes}{5470}
\newcommand{\GoalTransitionRetainedStateAfterEffectBytes}{8866}
\newcommand{\GoalTransitionScaleCellCount}{129}

\newcommand{\GoalTransitionScaleWarmupExecutionCount}{6450}
\newcommand{\GoalTransitionScaleMeasuredExecutionCount}{25800}
\newcommand{\GoalTransitionScaleMemoryExecutionCount}{129}

\newcommand{\GoalTransitionScaleHeapImmediateMedianBytes}{687016}

\newcommand{\GoalTransitionScaleHeapAfterGarbageCollectionMedianBytes}{84680}

\newcommand{\GoalTransitionScaleMaximumResidentSetDeltaMedianKilobytes}{288}

\newcommand{\GoalTransitionScaleAllowMaximumPercentileNinetyNineMicroseconds}{26739.833}
\newcommand{\GoalTransitionScaleAllowMaximumPercentileNinetyNineCell}{\mcode{F-256}}
\newcommand{\GoalTransitionScaleRelativeBlockMedianMadMedianPercent}{1.43}
\newcommand{\GoalTransitionScaleRelativeBlockMedianMadAtMostFivePercentCellCount}{123}
\newcommand{\GoalTransitionScaleGoalPopulationCapRetainedStateBytes}{296080}
\newcommand{\GoalTransitionScaleJoinOutputCapRetainedStateBytes}{117996}
\newcommand{\GoalTransitionScaleGoalPopulationCapCompleteReceiptChainBytes}{77727}
\newcommand{\GoalTransitionScaleMaximumDepthCapCompleteReceiptChainBytes}{78967}
\newcommand{\GoalTransitionScalePrecedenceEdgeCapTargetReceiptBytes}{4933}

\title[Runtime Authorization of Self-Generated Subgoals]
{Runtime Authorization of Self-Generated Subgoals in Long-Horizon Tool-Using AI Agents}

\author{Genliang Zhu}
\affiliation{%
  \institution{Accentrust}
  \city{Vancouver}
  \country{Canada}
}
\affiliation{%
  \institution{Georgia Institute of Technology}
  \department{School of Computer Science}
  \city{Atlanta}
  \state{Georgia}
  \country{USA}
}
\email{research@accentrust.com}

\author{Chu Wang}
\affiliation{%
  \institution{Accentrust}
  \city{Vancouver}
  \country{Canada}
}
\affiliation{%
  \institution{University of Illinois Urbana-Champaign}
  \department{Department of Computer Science}
  \city{Urbana}
  \state{Illinois}
  \country{USA}
}
\email{wally@accentrust.com}

\begin{document}

\begin{abstract}
Long-horizon, tool-using AI agents create subgoals, replan, delegate work, and
compose sibling results.  This creates an authorization gap: per-tool
permission checks cannot establish that a changing goal graph remains within
the principal-approved task.  We study this gap in a finite structured domain
with one principal and one authorization root.  Each proposed goal-graph
mutation carries a
version-bound witness that its continuation traces, resources, obligations,
active invariants, and closing condition refine the active root task contract;
each protected effect is rechecked at an atomic commit boundary.  Free-form
goal text supplies no authority.

We prove root trace-policy and modeled forbidden-state preservation under
explicit mediation, abstraction, freshness, and atomicity assumptions, plus
conditional root-success preservation, a separation result for memoryless
allowlists, exact finite-domain decidability, and universal-safety monotonicity
under sound abstraction refinement.  An executable model explores
\GoalTransitionBoundedStateCount{} states and
\GoalTransitionBoundedTransitionCount{} transitions.  We evaluate
\GoalTransitionMatchedCaseCount{} matched cases spanning
\GoalTransitionStructuralSchemaCount{} structural schemas; 24 of
\GoalTransitionContextualTemplateCount{} templates map to frozen identifiers
from four public benchmark artifacts.  The
complete mechanism commits zero forbidden states in
\GoalTransitionDriftedCaseCount{} drifted cases and completes all
\GoalTransitionBenignCaseCount{} benign counterparts.  Two public upstream
runtime paths execute \GoalTransitionRuntimeDispatchCount{} native dispatches
across \GoalTransitionRuntimeCaseCount{} cases, with every case-level decision
and receipt chain matching.  In a separately frozen host-local study, all
\GoalTransitionScaleCellCount{} synthetic one-factor-at-a-time cells match
their fixed decision and reason; the resulting curves characterize the finite
profile rather than production scalability.  A history-aware continuation comparator blocks
every modeled bad trace prefix yet commits all operations in
\GoalTransitionHistoryDriftedCommittedCount{} cases whose violations lie in
typed resources, freshness, or explicit-join evidence outside its trace
projection.  Within the registered structured domains, runtime authorization
preserves useful replanning while preventing self-generated subgoals from
becoming a source of new authority.
\end{abstract}

\begin{CCSXML}
<ccs2012>
  <concept>
    <concept_id>10002978.10002991.10002992</concept_id>
    <concept_desc>Security and privacy~Access control</concept_desc>
    <concept_significance>500</concept_significance>
  </concept>
  <concept>
    <concept_id>10002978.10002991.10002995</concept_id>
    <concept_desc>Security and privacy~Access control models</concept_desc>
    <concept_significance>300</concept_significance>
  </concept>
  <concept>
    <concept_id>10011007.10010940.10010971.10010972</concept_id>
    <concept_desc>Software and its engineering~Software verification</concept_desc>
    <concept_significance>300</concept_significance>
  </concept>
</ccs2012>
\end{CCSXML}
\ccsdesc[500]{Security and privacy~Access control}
\ccsdesc[300]{Security and privacy~Access control models}
\ccsdesc[300]{Software and its engineering~Software verification}

\keywords{AI agents, tool-using agents, AI agent security, runtime authorization,
access control, long-horizon planning, goal refinement, runtime enforcement}

\maketitle

\section{Introduction}
\label{sec:introduction}

Tool-using AI agents increasingly execute tasks that extend beyond one model
response or one API call.  A long-running agent may decompose a request,
delegate work, revise its plan after an observation, recover from a failed
operation, and combine the outputs of several subgoals before producing an
external effect.  These capabilities improve task completion, but they also
change the unit that authorization must govern.  A permission check on the
next tool call answers whether that call is locally admissible.  It does not,
by itself, answer whether the growing goal graph still implements the task the
principal authorized.

This distinction matters whenever policy is non-compositional.  Two writes can
each target an allowed field while their joint state violates a separation
constraint.  A recovery action can use an allowed tool yet widen the resource
scope from one account to every account.  A delegated task can preserve the
parent's vocabulary while replacing its completion condition with an easier
proxy.  A stale plan can remain locally valid after the policy or relevant
environment state has changed.  In each case, an allowlist or independent
per-call classifier can accept every action even though the authorized task
has been expanded through planning.

The classical literature already supplies the relevant foundations.  Execution
monitoring characterizes policies enforceable from traces
\cite{schneider2000enforceable,ligatti2005editautomata,leucker2009runtimeverification}; hierarchical
planning gives formal accounts of task decomposition and refinement
\cite{erol1996htncomplexity,georgievski2015htnoverview}; and safety and liveness separate finite
bad prefixes from eventual progress requirements
\cite{alpern1985liveness}.  Recent peer-reviewed agent benchmarks show that
realistic tool use is stateful, multi-step, and vulnerable to adversarial or
policy-violating trajectories
\cite{ruan2024toolemu,debenedetti2024agentdojo,yao2025taubench,trivedi2024appworld,tur2025safearena,jiang2026agentlab,li2026multiturn}.
The open systems question is therefore not whether sequence-aware policies
exist.  It is how to connect a dynamically changing, agent-produced goal graph
to a reference monitor so that planning changes become authorization events
and protected effects remain inside the original structured task contract.

\subsection{Problem statement}

We study one principal, one approved root task contract, and a finite
structured effect domain.  The root contract fixes required and forbidden
state predicates, resource bounds, outstanding obligations, temporal limits,
and a prefix-closed language of permitted effect traces.  The planner and its
generated natural-language explanations are untrusted.  Before a proposed
child, replacement, delegation, or same-root join becomes active, the agent
must submit a structured goal description and a checkable transition witness.
The witness establishes that the proposed continuation is no broader than the
current authorized continuation, preserves active invariants and obligations,
and was derived for the current policy and state versions.  Every protected
effect is then mediated against the active goal and the residual permitted
trace language.

The paper does not infer authority from free-form text.  An upstream component
may translate a user request into a structured contract, but the principal or
a trusted policy service must approve the structure used for enforcement.
This boundary keeps the security claim decidable and auditable: the mechanism
checks refinement in a registered finite domain rather than claiming general
natural-language entailment.

\subsection{Why local authorization is insufficient}

Consider a task that permits changing either a deployment artifact or its
traffic target during diagnosis, but forbids publishing unreviewed code to the
production target.  The first update is locally permitted, as is the second.
After the first update, however, the second must be rejected unless a review
obligation has been discharged.  A memoryless action allowlist observes the
same second action in both the safe and unsafe histories and must return the
same decision.  A final-state checker can detect the violation only after both
effects have committed.  The missing state is not another textual warning; it
is the authorized composition of the current prefix and the still-reachable
future effects.  Section~\ref{sec:counterexample} gives the executable minimal
case and its matched benign trace.

\subsection{Approach}

We represent the approved task and every proposed subgoal in one typed effect
and state vocabulary.  A goal node records its parent, success predicate,
resources, obligations, active frames, policy/state versions, and a finite set
of maximal completion traces whose prefix closure denotes the remaining
continuations.  An ordinary transition is checked along one fresh
leaf-to-root witness path; sibling continuations are composed only by an
explicit join request whose admissible interleavings are enumerated.
Authorization checks the proposed edit against every residual ancestor
language on that path.  A definite counterexample returns \Deny{}; an
unsupported or resource-exhausting decision returns \Indeterminate{}; and a
relevant policy or observed-state version change suspends the execution until
\Reauthorize{}.  Changing the root contract instead starts a new execution
epoch.  Only \Allow{} makes the transition active.

The mechanism is external to the planner.  It therefore does not depend on a
particular prompting method, model provider, or disclosure of private chain of
thought.  Its trusted computing base is limited to the contract registry,
transition checker, effect normalizer, state/version source, receipt store, and
commit adapter.  The model, planner, retrieved content, worker agents, and
free-form goal text remain untrusted.

\subsection{Contributions}

This paper makes six contributions:

\begin{enumerate}
  \item It identifies a precise authorization gap between locally permitted
  tool calls and globally permitted goal evolution, and supplies an executable
  matched counterexample in which identical current actions require different
  decisions because their prefixes differ.

  \item It defines a structured semantics for a root task contract, a
  versioned goal graph, transition witnesses, and residual permitted trace
  languages.  Goal creation, replacement, delegation, and same-root joining
  are treated as authorization-relevant state changes rather than planner-only
  bookkeeping.

  \item It proves root-policy safety under explicit mediation, abstraction,
  freshness, and commit assumptions; conditional preservation of required
  terminal predicates; a standard insufficiency result for memoryless local
  checks; a terminating exact decision procedure for registered finite domains;
  and monotonicity of the universal-safety subcheck under sound abstraction
  refinement.

  \item It implements the transition protocol, an independently implemented
  trace checker, bounded state exploration, targeted security mutants, and
  six specified comparison mechanisms.  Each comparison receives the exact
  input projection required by its definition; the strongest trace comparison
  receives the root trace policy, effect catalog and transition semantics,
  forbidden and required-terminal predicates, current state, complete committed
  prefix, proposed continuations, and proposed goal success predicate.

  \item It evaluates matched benign and drifted workflows rooted in public
  task sources through two independently developed public tool-execution
  paths.  Outcomes are fixed by structured state and effect validators;
  post-commit detection is reported separately from prevention at the commit
  boundary.

  \item A separately frozen secondary study probes the finite reference
  envelope over \GoalTransitionScaleCellCount{} synthetic
  one-factor-at-a-time cells.  It tests direct representation limits and
  internal enumeration guards while retaining host-local latency, serialized
  evidence size, retained-state size, and runtime memory indicators.
\end{enumerate}

The empirical claims in this manuscript are generated from the retained
receipts and fixed denominators described in
Section~\ref{sec:evaluation-design}; exact results appear only after the
artifact verifier reproduces them.  The primary local-cost snapshot contains
\GoalTransitionPerformanceSampleCount{} timed transition, effect-authorization,
and commit operations over 25 structural representatives.  The separate scale
study contains \GoalTransitionScaleMeasuredExecutionCount{}
\mcode{authorizeTransition} samples.  These denominators answer different
questions and are never pooled.

\subsection{Organization}

Section~\ref{sec:counterexample} presents the separating execution and threat
model.  Section~\ref{sec:related-work} positions the mechanism against access
control, runtime enforcement, planning verification, and agent safety.
Section~\ref{sec:formalization} defines the model and results.
Sections~\ref{sec:protocol} and~\ref{sec:implementation} present the protocol
and reference realization.  Sections~\ref{sec:evaluation-design}
and~\ref{sec:results} give the registered study and results.
Sections~\ref{sec:discussion}--\ref{sec:conclusion} state deployment
implications, validity conditions, and conclusions.

\section{Motivating Counterexample and Threat Model}
\label{sec:counterexample}

\subsection{A matched pair with identical local permissions}

The smallest separating example uses two Boolean state variables:
\(u\), indicating that an unreviewed artifact is selected, and \(p\),
indicating that production traffic targets the selected artifact.  The root
task is to diagnose a failed deployment.  It permits the normalized effects
\(\mcode{select(unreviewed)}\), \(\mcode{route(production)}\), and
\(\mcode{record(review)}\) as individual operations, but forbids the reachable
state \(u\land p\) unless the review obligation has first been discharged.

The benign and drifted executions share the same root task and tool affordance:

\begin{align*}
  t_{\mathrm{safe}} &={}
    \langle \mcode{select(unreviewed)},\mcode{record(review)},
    \mcode{route(production)}\rangle,\\
  t_{\mathrm{drift}} &={}
    \langle \mcode{select(unreviewed)},\mcode{route(production)}\rangle.
\end{align*}

Every symbol belongs to the static allowlist.  At the second trace's final
step, a memoryless checker sees the same locally permitted
\(\mcode{route(production)}\) action that appears in the safe trace.  The
correct decision nevertheless depends on the committed prefix and the
undischarged obligation.  The drifted plan therefore separates membership in a
set of individually permitted actions from membership in a permitted trace
language.

The executable fixture additionally represents planning state.  After the
first effect, the safe replan introduces a review subgoal and then routes
traffic.  The drifted replan replaces the parent completion condition with
``production responds'' and omits review.  A per-action check accepts both
proposed operations.  Transition authorization rejects the replacement before
the route effect becomes reachable because the new success condition and
continuation traces do not refine the active parent.

\begin{figure*}[t]
\centering
\begin{tikzpicture}[
  x=1cm,y=1cm,
  box/.style={draw,rounded corners=2pt,align=center,minimum height=8mm,
    minimum width=27mm,inner sep=3pt,font=\footnotesize},
  safe/.style={box,draw=gtGreen,fill=gtGreenFill,text=gtGreen},
  bad/.style={box,draw=gtRed,fill=gtRedFill,text=gtRed},
  neutral/.style={box,draw=gtBlue,fill=gtBlueFill,text=gtBlue},
  every path/.style={-{Latex[length=2mm]},thick}
]
\node[neutral] (root) at (0,0) {approved root task\\diagnose deployment};
\node[neutral] (select) at (3.2,0) {select unreviewed\\artifact};
\node[safe] (review) at (6.6,1.15) {authorized refinement\\record independent review};
\node[safe] (safecommit) at (10.1,1.15) {route production\\allowed};
\node[bad] (skip) at (6.6,-1.15) {non-refining replan\\replace success; omit review};
\node[bad] (badcommit) at (10.1,-1.15) {route production\\must be blocked};
\draw[gtBlue] (root) -- (select);
\draw[gtGreen] (select) -- (review);
\draw[gtGreen] (review) -- (safecommit);
\draw[gtRed] (select) -- (skip);
\draw[gtRed,dashed] (skip) -- (badcommit);
\end{tikzpicture}
\caption{Matched traces with the same local tool affordances.  The unsafe
branch is rejected when its goal replacement is proposed, before the
production-routing effect can commit.}
\Description{Flow diagram from an approved deployment-diagnosis task to an
unreviewed-artifact selection.  The upper green branch adds an independent
review and then permits production routing.  The lower red branch replaces
the success condition, omits review, and is blocked before production
routing.}
\label{fig:counterexample}
\end{figure*}

\subsection{System boundary}

The protected system has one trusted principal and one approved root task
contract.  The contract is expressed in a registered finite vocabulary of
state predicates, normalized effects, resources, obligations, and versions.
The agent may construct a directed acyclic goal graph below that root.  It may
create a child, replace an active goal during replanning, delegate a child to
another worker operating under the same root, join same-root sibling results,
cancel work, and close a goal.  Each operation is explicit and receipt-bearing.

Protected external effects pass through a commit adapter.  Read-only
observations can also be registered when their values affect authorization,
but the principal safety property concerns normalized external effects that
cross the adapter.  Effects outside this mediation boundary are outside the
execution represented by the model; the implementation therefore exposes a
fail-closed interface rather than claiming control over unregistered channels.

\subsection{Adversary}

The adversary may control the planner's proposed goal graph, free-form goal
text, retrieved content, intermediate memory, worker output, ordering of
requests, retries, and proposed effect arguments.  It may attempt to:

\begin{itemize}
  \item broaden a child's resources, effects, success predicate, or duration;
  \item split an unauthorized outcome across individually plausible siblings;
  \item replace a failed goal with a proxy that is easier to satisfy;
  \item reuse a witness after relevant state or policy changes;
  \item present another root's child as a compatible delegation or join;
  \item commit an effect before its transition is authorized, or replay a
  previously accepted request with different semantics.
\end{itemize}

The adversary cannot compromise the trusted registry, checker, state/version
source, receipt store and its integrity anchor, effect normalizer, or commit
adapter; rewrite the retained hash-linked receipt sequence without detection;
or bypass the registered effect boundary.  The reference artifact provides
tamper evidence under that trusted-storage assumption; it does not implement
public-key origin authentication.  Denial of service remains possible because
fail-closed authorization permits the checker to block execution when evidence
is incomplete.

\subsection{Security and functional properties}

The primary security property is prefix safety: every committed prefix and
every trace still admitted by the active goal's sound continuation abstraction
remain in the root contract's permitted trace language and avoid its modeled
forbidden states.  This property is independent of task completion.

The functional property is conditional: when every authorized child closes
under a valid refinement witness, active invariants hold at every registered
prefix, residual work and obligations are discharged, and the root closes,
the root's required terminal predicates hold.  The result does not require an
agent to find a plan or to terminate; it prevents successful closure from being
claimed through a non-refining structured goal graph.

\subsection{Scope}

The study covers single-root, single-principal authorization in finite
structured domains.  It directly covers create, replace, same-root delegate,
same-root join, invalidate, effect authorization, effect commit, cancel, and
close transitions.  Natural-language-to-policy compilation, ethical evaluation
of the principal's request, unmodeled physical consequences, compromised
trusted components, and combination of independent authorization roots are
separate problems.  These scope choices keep the stated theorem and the
executed implementation aligned.

\section{Background and Related Work}
\label{sec:related-work}

This work connects three established lines of research: reference-monitor and
trace-policy enforcement, hierarchical planning and formal refinement, and
safety evaluation for tool-using agents.  We use these foundations to answer a
specific systems question: how an enforcement service should decide whether
an untrusted agent may change the structured objective under which later
protected effects will execute.

\subsection{Authorization and trace enforcement}

Complete mediation and least privilege require security-relevant operations to
be checked at an authoritative boundary \cite{saltzer1975protection}.
Authorization logics make principals and delegated authority explicit
\cite{abadi1993accesscalculus}, while usage control accounts for decisions that
remain conditional on mutable attributes during use \cite{park2004ucon}.
History-based access control further establishes that the same current
operation can require different decisions after different prefixes
\cite{edjlali1998history}.  These results motivate the protected-effect
adapter and versioned decision context in this paper, but they do not define
authorization over changes to a planner's goal graph.

Trace enforcement supplies the closest formal foundation.  Security automata
characterize an important class of policies enforceable from execution
prefixes \cite{schneider2000enforceable}; edit automata broaden the available
interventions \cite{ligatti2005editautomata}; and later work sharpens the
boundary between monitorable and enforceable policies
\cite{basin2013enforceable}.  Runtime verification provides verdicts from
finite observations \cite{leucker2009runtimeverification,bauer2011ltlruntime}.
The safety/liveness distinction is therefore material: a finite bad prefix can
justify blocking, whereas eventual task progress is not guaranteed merely by
preventing bad prefixes \cite{alpern1985liveness}.  Our primary theorem is a
safety result under complete mediation and sound abstraction; the separate
success theorem is conditional on closure and does not assert termination.

Finite-trace temporal logic gives standard semantics for bounded executions
\cite{degiacomo2013ltlf}.  Temporal logic has also long constrained planning
and reactive control \cite{bacchus2000temporalsearch,kressgazit2009reactive},
and shielding can prevent an unsafe next action, including under partial
observability \cite{alshiekh2018shielding,carr2023partialshielding}.  Thus,
neither temporal policies nor online intervention is a novelty claim here.
The additional object checked in this paper is a proposed change to the
authorized goal structure and its modeled continuations, before that change
can make new protected effects reachable.

\subsection{Hierarchical planning and refinement}

Hierarchical task network (HTN) planning recursively decomposes abstract tasks
according to domain methods.  Its formal complexity, representative planning
systems, and design space are well established
\cite{erol1996htncomplexity,nau2003shop2,georgievski2015htnoverview,bercher2019hierarchicalsurvey}.
Hierarchical goal network (HGN) planning instead gives decomposed goals
state-based semantics and explicit method-correctness conditions
\cite{shivashankar2012hgn}; recent work extends HGN reasoning to stochastic
outcomes \cite{chan2026probabilistichgn}.  Refinement mappings likewise predate
the present setting \cite{abadi1991refinement}.

These formalisms explain how trusted domain knowledge can structure planning.
They do not, by themselves, answer whether a decomposition proposed at runtime
by an untrusted agent is authorized under a security mandate, whether evidence
for that decision remains current after a policy or relevant-state change, or
whether a protected effect still matches the active authorized goal at commit
time.  This paper therefore does not introduce a planning formalism.  It uses
a registered finite transition model and ordinary refinement obligations as
inputs to an authorization decision over dynamic graph mutations.

\subsection{Tool-using-agent security and long-horizon evaluation}

Peer-reviewed environments now support substantially stronger evaluation than
author-written demonstrations alone.  ToolEmu emulates high-stakes tool
interfaces for scalable risk discovery \cite{ruan2024toolemu}; AgentDojo
jointly evaluates utility and prompt-injection security in a stateful tool
environment \cite{debenedetti2024agentdojo}; InjecAgent and Agent Security
Bench broaden indirect-injection and attack/defense evaluation
\cite{zhan2024injecagent,zhang2025agentsecuritybench}.  SafeArena supplies
paired benign and harmful web tasks \cite{tur2025safearena}.  AppWorld,
WebArena, OSWorld, and $\tau$-bench provide multi-step app, web, computer-use,
and tool--user interaction tasks with external state or task-success checks
\cite{trivedi2024appworld,zhou2024webarena,xie2024osworld,yao2025taubench}.
These sources motivate the independent task origins and state-based oracles in
our evaluation; success on one environment is not treated as evidence of
universal safety.

Long-horizon evidence is now available in formal proceedings as well as
preprints.  AgentLAB evaluates adaptive attacks including objective drift,
tool chaining, task injection, and memory poisoning
\cite{jiang2026agentlab}.  MT-AgentRisk evaluates harmful multi-turn tool use
and a tool-oriented defense \cite{li2026multiturn}.  Goal
misgeneralization and uncertainty about the intended reward provide related
evidence that optimizing a learned or inferred objective can diverge from the
designer's intended result
\cite{langosco2022goalmisgeneralization,hadfieldmenell2017inversereward}.
These works establish the practical importance of sequence-level evaluation.
They do not make every long execution unsafe, and their threat categories are
prior work rather than contributions of this paper.

Recent preprints add trajectory-focused evidence.  ATBench and HINTBench study
trajectory-level and intrinsic non-attack failures
\cite{li2026atbench,wang2026hintbench}.  SafetyDrift predicts violation risk
from accumulated trajectory state \cite{dhodapkar2026safetydrift}, while the
Constraint Drift position paper argues that constraints must remain operative
across memory, delegation, tools, and audit boundaries
\cite{li2026constraintdrift}.  We use these preprints to delimit current
questions and potential evaluation cases, not as substitutes for the
peer-reviewed foundations above.

\subsection{Runtime controls for language-model agents}

Several recent systems are direct technical neighbors.  AgentSpec, published
at ICSE 2026, provides a language and runtime for customizable agent
constraints \cite{wang2026agentspec}.  Formal Methods Meet LLMs, published at
FAccT 2026, applies temporal-logic auditing, monitoring, prediction, and
intervention to advanced AI systems \cite{alamdari2026formalmethods}.  CaMeL,
published at SaTML 2026, separates trusted control flow from untrusted data and
uses capabilities to constrain tool-mediated information flow
\cite{debenedetti2026camel}.  These peer-reviewed systems rule out claims based
only on a policy language, temporal monitoring, runtime intervention, or
capability-gated tool calls.

CaMeLoT translates an agent-generated plan to a finite-state transition system
and checks temporal policies before execution \cite{nikolaou2026camelot}.  Its
arXiv record reports acceptance at CAMLIS 2026, but a proceedings record was
not available at the 2026-10-02 literature cutoff; we therefore label it
\emph{accepted}, not formally published.  It is the closest plan-verification
comparison because it reasons about the whole proposed plan rather than an
isolated tool call.  Our evaluation therefore distinguishes initial
plan verification from authorization of graph changes introduced after
execution has begun, including invalidation and reauthorization after relevant
version changes.

The remaining closest works were preprints at the cutoff.  Agent-C constrains
tool-call generation using temporal specifications and SMT solving
\cite{kamath2025agentc}; VIGIL checks order, parameter relations, and value
flow across agent--tool events \cite{li2026vigil}.  RGE monitors role, goal,
and evidence drift at trajectory prefixes \cite{he2026rge}.  Agent Behavioral
Contracts covers preconditions, invariants, governance policies, and recovery
\cite{bhardwaj2026behavioralcontracts}, while Structural Enforcement of Goal
Integrity separates policy, authorization, and execution and tracks lineage
\cite{xiang2026goalintegrity}.  Their publication status limits the weight of
their reported empirical results, but their mechanisms still constrain our
novelty language: temporal enforcement, drift monitoring, contracts,
architectural separation, and lineage are not claimed as new.

\begin{table*}[t]
  \centering
  \caption{Positioning by controlled object and enforcement point.  Status is
  recorded at the 2026-10-02 cutoff: PR = peer-reviewed publication, A =
  accepted with proceedings pending, and PP = preprint.  ``Not explicit''
  means that the cited work does not specify the listed mechanism as part of
  its reported design; it is not an assertion that the capability is
  impossible or absent from every implementation.}
  \label{tab:related-work-comparison}
  \scriptsize
  \renewcommand{\arraystretch}{0.92}
  \setlength{\tabcolsep}{3pt}
  \begin{tabularx}{\textwidth}{@{}L{0.17\textwidth}L{0.19\textwidth}L{0.15\textwidth}YYY@{}}
    \toprule
    Work and status & Primary controlled object & Decision point & Dynamic goal-graph mutation & Version-bound reauthorization & Protected-effect gate \\
    \midrule
    Security/edit automata \cite{schneider2000enforceable,ligatti2005editautomata} (PR)
      & Event traces
      & During execution
      & Not a planning model
      & Policy-state dependent, but no goal witness
      & Abstract enforcement event \\
    HTN/HGN planning \cite{nau2003shop2,shivashankar2012hgn,chan2026probabilistichgn} (PR)
      & Task or goal decomposition
      & Plan construction/progression
      & Decomposition, without a security-authorization adversary
      & Not an authorization lifecycle
      & Not its control boundary \\
    AgentSpec \cite{wang2026agentspec} (PR)
      & Structured behavioral rules
      & Runtime events
      & Not explicit
      & Not explicit
      & Runtime enforcement \\
    Formal Methods Meet LLMs \cite{alamdari2026formalmethods} (PR)
      & Temporally extended constraints
      & Audit, monitor, predict, intervene
      & Not explicit
      & Monitor state evolves; witness invalidation not explicit
      & Intervention on predicted or observed behavior \\
    CaMeL \cite{debenedetti2026camel} (PR)
      & Control/data flow and capabilities
      & Program/tool execution
      & Not explicit
      & Not explicit
      & Capability-mediated tool calls \\
    CaMeLoT \cite{nikolaou2026camelot} (A)
      & Generated finite-state plan
      & Before plan execution
      & Reverification of mid-run mutations not explicit
      & Not explicit
      & Complements CaMeL execution controls \\
    Agent-C \cite{kamath2025agentc} (PP)
      & Temporal tool-use constraints
      & During tool-call generation
      & Not explicit
      & Not explicit
      & Constrained generation blocks noncompliant calls \\
    VIGIL \cite{li2026vigil} (PP)
      & Cross-call order, arguments, and value flow
      & Agent--tool runtime events
      & Not explicit
      & Not explicit
      & Runtime enforcement \\
    RGE \cite{he2026rge} (PP)
      & Role, goal, and evidence drift
      & Online prefix monitoring
      & Observes evolving trajectories
      & Deterministic monitor updates, not a refinement witness
      & Monitor or intervention decision \\
    Behavioral contracts and structural goal integrity
      \cite{bhardwaj2026behavioralcontracts,xiang2026goalintegrity} (PP)
      & Contracts, invariants, intent, and lineage
      & Architecture and runtime gates
      & Goal evolution is addressed at an architectural or contract level
      & Version-bound graph-witness invalidation not explicit
      & Runtime and output gates \\
    \textbf{This work}
      & Same-root structured goal graph and residual effect traces
      & Every graph mutation and protected effect
      & Create, replace, delegate, cancel, and same-root join
      & Relevant policy/state changes invalidate retained witnesses
      & Active goal and residual trace checked before commit \\
    \bottomrule
  \end{tabularx}
\end{table*}

\subsection{Residual contribution boundary}

Table~\ref{tab:related-work-comparison} compares reported control objects and
decision points rather than ranking systems.  Building on reference monitoring,
trace enforcement, temporal specifications, refinement, hierarchical
decomposition, static plan verification, drift monitoring, contracts, and
tool-call gates, this paper contributes their authorization connection in the
stated model:
an untrusted agent proposes a mutation to a versioned, single-root structured
goal graph; a separate checker requires machine-checkable evidence that the
new continuation conservatively refines the active authorization; relevant
policy or state changes invalidate that evidence; and a commit adapter checks
each protected effect against the active authorized node and residual trace.

This boundary excludes arbitrary natural-language entailment, independent
authorization roots, and unmodeled real-world consequences.  The empirical
evaluation tests whether the complete combination covers typed graph changes
that local controls, final-state detection, and an exact continuation-trace
comparator do not jointly capture, while retaining valid replanning,
delegation, and recovery under matched policies and tasks.  Learned drift
monitors and HTN/HGN planners address different decision problems and are
therefore positioned conceptually rather than reimplemented as accuracy
baselines.

\section{Structured Goal-Transition Semantics}
\label{sec:formalization}

We formalize goal creation, replacement, same-root delegation, and same-root
composition as authorization events.  The model is intentionally restricted to
a trusted structured task domain.  Free-form text may motivate a proposal, but
neither a model explanation nor similarity between natural-language goals is an
authorization fact.

\subsection{Transition system and root task contract}

Let $\Effects$ be the alphabet of normalized protected effects and
$\mathcal S$ the set of modeled external states.  A registered transition
relation $s\xrightarrow{e}s'$ describes the possible states after effect
$e\in\Effects$.  Its trace lifting is written
$\mathsf{Post}(s,u)\subseteq\mathcal S$ for $u\in\Effects^*$.  It is defined
recursively by $\mathsf{Post}(s,\epsilon)=\{s\}$ and
\(
\mathsf{Post}(s,eu)=
\bigcup_{x\in\mathsf{Post}(s,e)}\mathsf{Post}(x,u)
\).
The relation may be nondeterministic.  We write $v\preceq u$ when $v$ is a
prefix of $u$, $\epsilon$ for the empty trace, and
$t^{-1}L=\{u\mid tu\in L\}$ for the residual of a language $L$ after $t$.

\begin{definition}[Root task contract]
\label{def:root-contract}
A root task contract is
\begin{equation}
 C=(\rho,\eta,s_0,L_C,B_C,Q_C,R_C,O_C),
\label{eq:root-contract}
\end{equation}
where $\rho$ is the authorization-root identifier, $\eta$ is its immutable
contract epoch, $s_0$ is the initial modeled state, $L_C\subseteq\Effects^*$
is a prefix-closed permitted trace language containing $\epsilon$,
$B_C\subseteq\mathcal S$ is the forbidden-state set, $Q_C$ is the required
root-closing predicate, $R_C$ is a resource envelope, and $O_C$ is the set of
unresolved obligations.  Resource envelopes belong to a registered preorder
$(\mathcal R,\sqsubseteq_R)$ with a sound domain-specific aggregation
operator.  The contract is supplied or explicitly approved by the principal;
deriving it from unrestricted natural language is outside the model.
\end{definition}

The policy version, relevant-state version, and goal-graph revision are
recorded in
\begin{equation}
 \nu=(\eta,\nu_P,\nu_S,\kappa).
\label{eq:version-vector}
\end{equation}
Here $\nu_S$ changes whenever an out-of-band observation changes a state
component on which a proof depends.  A mediated protected effect instead
updates the modeled state, committed-trace digest, graph revision, residual
languages, and selected witness atomically; it therefore cannot leave the old
evidence current even without a second state-version increment.  A new root
task contract creates a new epoch; it is not an ordinary goal-graph edit.  Each policy version names a
structured admissibility predicate
$\mathsf{Policy}_{\nu_P}(s,t,K,R,O)$ over the proposed future-trace set,
resources, and obligations.  A policy change need not be monotone: freshness,
rather than comparison with the old policy, is the authorization rule.

For a current state $s$, committed trace $t$, and prefix-closed set $K$ of
permitted future traces, define
\begin{equation}
\begin{split}
 \mathsf{SafeFuture}_C(s,t,K)\ \Longleftrightarrow\ {}&t\in L_C\ \land\\
 &\forall u\in K\;\forall v\preceq u:\quad
 tv\in L_C\ \land\
 \mathsf{Post}(s,v)\cap B_C=\varnothing.
\end{split}
\label{eq:safe-future}
\end{equation}
This predicate constrains order, multiplicity, and state-dependent composition;
it is not a flat set-membership test on individual effects.

\subsection{Versioned goal graph and same-root composition}

A goal graph is a finite acyclic graph with containment edges from a parent to
its refinements and precedence edges among siblings.  Each node is
\begin{equation}
 g=(i,\rho_g,p_g,k_g,Q_g,K_g,D_g,R_g,O_g,F_g,\zeta_g).
\label{eq:goal-node}
\end{equation}
$i$ is a unique node identifier; $\rho_g$ is its authorization root; $p_g$ is
its unique containment parent except at the root; $k_g$ is its node revision;
$Q_g$ is its structured closing predicate; $K_g\subseteq\Effects^*$ is a
prefix-closed, sound over-approximation of all modeled finite traces the node
may produce; $D_g\subseteq K_g$ is the subset on which the node may close;
$R_g$, $O_g$, and $F_g$ are its resource envelope, unresolved obligations,
and prefix-wise frame predicates; and $\zeta_g$ is the version vector against
which the node was authorized.  Text descriptions are audit metadata only.

Consider children $c_1,\ldots,c_m$, a residual completion component $d$, and
a witnessed partial order $\prec$ over those components.  Let
\begin{equation}
 (K_\Delta,D_\Delta)=
 \mathsf{Comp}_{\prec}
 \bigl((K_{c_1},D_{c_1}),\ldots,(K_{c_m},D_{c_m}),(K_d,D_d)\bigr)
\label{eq:composition}
\end{equation}
denote their ordinary trace composition: $K_\Delta$ contains every finite
prefix generated by a scheduler that preserves each component's internal
order and $\prec$; $D_\Delta$ contains exactly the traces on which all required
children and the residual component close.  Thus incomparable siblings are
checked under every registered interleaving.  In particular,
$\mathsf{Comp}_{\prec}$ is not set union: it retains order, multiplicity, and
joint state reachability.

Every accepted edit increments the graph revision $\kappa$.  Replacement
creates a new node revision rather than changing an authorized node in place.
Delegation changes the executor but not $\rho_g$.  This paper has one
principal and one root per execution; a cross-root join is outside the model
and is rejected.

Ordinary sibling nodes are not implicitly concurrent.  At a commit boundary,
a fresh witness selects one leaf-to-root path
\(\mathsf{Path}(w)=(g_0,\ldots,g)\).  The continuations that can authorize the
next effect are
\begin{equation}
 K_{X,w}=\bigcap_{h\in\mathsf{Path}(w)} K_h,
 \label{eq:witness-path-language}
\end{equation}
where every $K_h$ is residualized at the same committed prefix.  A graph edit
advances $\kappa$, so witnesses for other sibling paths become stale until
explicitly reauthorized.  Multiple siblings enter one current composition only
through an explicit join, whose language is the complete
$\mathsf{Comp}_{\prec}$ construction in Eq.~\eqref{eq:composition}.  This rule
prevents an informal union of individually authorized siblings from becoming
an execution policy.

\subsection{Transition witnesses}

A proposed edit $\Delta$ below active parent $p$ carries a transition witness
\begin{equation}
 w=\langle\rho,\eta,\kappa_{\rm base},\kappa_{\rm new},
       \nu_P,\nu_S,i_p,k_p,h(\Delta),\Pi\rangle,
\label{eq:witness}
\end{equation}
where $\Pi$ contains the structured proof evidence.  The checker returns
$\Allow$ only if all of the following obligations hold.

\begin{enumerate}[label=\textbf{W\arabic*.},ref=W\arabic*]
  \item \label{wit:binding}
  The witness binds the current root and epoch, exact parent revision, current
  base graph revision, proposed new revision, and canonical edit.

  \item \label{wit:root}
  Every new node and delegation remains below root $\rho$; containment remains
  acyclic and the referenced parent is active.

  \item \label{wit:trace}
  The edit supplies a complete child--residual composition
  $(K_\Delta,D_\Delta)$, and its future language is non-expansive:
  \begin{equation}
    K_\Delta\subseteq K_p\cap t^{-1}L_C.
    \label{eq:trace-nonexpansion}
  \end{equation}
  Moreover, $K_\Delta$ soundly contains all modeled concrete continuations of
  the selected edited path, its completion component is bound to the current
  parent residual, and the current authorization policy satisfies
  $\mathsf{Policy}_{\nu_P}(s,t,K_\Delta,R_\Delta,O_\Delta)$.

  \item \label{wit:state}
  Every prefix admitted by $K_\Delta$ preserves the root forbidden-state
  condition and every frame active on the selected path:
  \begin{equation}
  \begin{split}
   \forall u\in K_\Delta\;\forall v\preceq u:\quad
   &\mathsf{Post}(s,v)\cap B_C=\varnothing\ \land\\
   &\forall F\in\mathsf{ActiveFrames}(\Delta,v)\;
     \forall s'\in\mathsf{Post}(s,v):F(s').
  \end{split}
  \label{eq:frame-preservation}
  \end{equation}

  \item \label{wit:resources}
  The registered aggregate of simultaneously exercisable child resources is
  bounded by both $R_p$ and $R_C$.  A child cannot introduce an authorization
  obligation: its unresolved set equals $O_p$ minus obligations accompanied by
  independently checkable discharge evidence from the contract's trusted
  catalog.  Merely adding or deleting an obligation is invalid.  Explicit joins
  apply this rule to the complete affected sibling composition.

  \item \label{wit:success}
  For every $u\in D_\Delta$ and every $s'\in\mathsf{Post}(s,u)$, validated
  closure of the children and residual component, together with their
  discharged obligations, entails $Q_p(s')$.  At the root,
  $Q_{g_0}(s')\Rightarrow Q_C(s')$.  The relation is evaluated in the common
  parent-closing state, not at unrelated earlier states.

  \item \label{wit:freshness}
  The contract epoch, policy version, relevant-state version, parent revision,
  and graph base revision equal their trusted current values at authorization.
\end{enumerate}

These are semantic obligations over a registered finite-state or otherwise
decidable task theory; they are not inferred from the checker's own verdict.
When exact checking is unavailable, a sound over-approximation may prove all
universal obligations and return $\Allow$.  A concrete counterexample or an
exact structural mismatch returns $\Deny$.  Spurious unsafe behavior in an
over-approximation returns $\Indeterminate$, and a changed version returns
$\Reauthorize$.  Both latter outcomes block protected effects.

\subsection{Operational rules and invalidation}

An execution configuration is
\begin{equation}
 X=(C,s,t,G,\nu,W,q),
\label{eq:configuration}
\end{equation}
where $W$ is the set of accepted witnesses and
$q\in\{\mathsf{runnable},\mathsf{suspended},\mathsf{closed}\}$.  The active
goal graph and a fresh selected witness determine the prefix-closed future set
$K_{X,w}$ in Eq.~\eqref{eq:witness-path-language}.  The transition relation has
five classes:

\begin{description}[leftmargin=1.2em,style=nextline]
  \item[Authorize a graph edit.]
  From a runnable configuration, validate W1--W7 against the current state and
  base revision; atomically install the new graph revision and witness.  A
  failed or indeterminate check does not alter the active graph.

  \item[Commit a protected effect.]
  Normalize the proposed effect $e$, require a current witness on the selected
  path and $e^{-1}K_{X,w}\ne\varnothing$, select an actual modeled successor
  $s'\in\mathsf{Post}(s,e)$, and atomically update
  \begin{equation}
    (s,t,K_{X,w})\leftarrow(s',te,e^{-1}K_{X,w}).
    \label{eq:effect-residual}
  \end{equation}
  No other path may commit a protected effect.

  \item[Invalidate.]
  An out-of-band change to the authorization policy or a relevant observed
  state component advances the corresponding version before the next protected
  commit, invalidates every dependent witness, and sets
  $q=\mathsf{suspended}$.  Goal-graph mutation has no out-of-band interface:
  an accepted graph edit instead atomically advances $\kappa$, stales all old
  witnesses, installs the selected edit's new witness, and keeps the execution
  runnable.

  \item[Reauthorize or cancel.]
  Reauthorization recomputes the complete selected path against current
  versions.  If no active non-root goal exists, a suspended root may instead
  resume by atomically installing a new graph edit that passes W1--W7 against
  the current versions.  These are the only ways to return a suspended
  execution to $\mathsf{runnable}$.  Cancellation removes future effects but
  creates no authority and does not by itself resume a suspended execution.
  A later recovery may submit a full current-version replacement that
  explicitly targets the cancelled node; acceptance changes the cancelled
  tombstone to replaced and installs a new active node under W1--W7.  Neither
  operation reuses an old witness, and cancellation alone is not evidence of
  successful closure.

  \item[Close.]
  A non-root goal closes only under a current witness, after its actually
  executed suffix belongs to the current $D_g$, its structured predicate and
  discharge evidence validate in the common closing state, and every required
  descendant in the witnessed composition has closed.  The root closes only
  from a runnable configuration after every required descendant has resolved,
  the committed trace is a permitted completion, all root obligations are
  discharged, and $Q_C$ holds.  Closing a non-root goal only changes that
  goal's status, advances $\kappa$, and stales the prior witnesses while $q$
  remains runnable; another active path must be reauthorized before a protected
  commit.  Only root closure sets $q=\mathsf{closed}$.  Closure commits no
  external effect by itself.
\end{description}

Replacing the root contract requires principal authorization of a new epoch
and begins a new execution; it is not an internal step of the transition system
proved here.  A new execution may import a historical prefix only if its new
contract explicitly admits that prefix.  Ordinary subgoal replacement cannot
change the root contract.

\subsection{Assumptions and claim boundary}

The results use the following explicit premises.

\begin{enumerate}[label=\textbf{A\arabic*.},ref=A\arabic*]
  \item \label{asm:mediation} Every protected external effect is completely
  mediated by Eq.~\eqref{eq:effect-residual}.
  \item \label{asm:fidelity} Effect normalization and the commit adapter are
  faithful: the checked effect is the effect executed.
  \item \label{asm:model} The registered transition relation and state
  observations soundly cover the modeled effects and nondeterminism used by the
  contract.
  \item \label{asm:future} Each $K_g$ soundly over-approximates the modeled
  finite continuations of its node.
  \item \label{asm:checker} The checker soundly validates W1--W7 in the trusted
  structured theory.
  \item \label{asm:atomicity} Version advance, witness invalidation, graph
  installation, and effect commit have a common atomic order.
  \item \label{asm:initial} The initial root node is well formed,
  $\mathsf{SafeFuture}_C(s_0,\epsilon,K_{g_0})$ holds, and its resource and
  obligation fields equal those of $C$; its closing predicate satisfies
  $Q_{g_0}\Rightarrow Q_C$.
  \item \label{asm:single-root} One principal and one authorization root govern
  an execution; cross-root and multi-principal composition are rejected.
\end{enumerate}

The system does not guarantee that the principal chose a desirable contract,
discover unmodeled physical consequences, prove termination, or decide
arbitrary natural-language entailment.  An exogenous event may independently
create a forbidden world state; A3 and A6 require it to invalidate dependent
witnesses before another protected effect, but authorization cannot undo that
external event.

\subsection{Results}

\begin{theorem}[B1: root-task safety]
\label{thm:root-safety}
Under A1--A8, every execution generated by the operational rules preserves
$t\in L_C$.  Every protected effect committed from a runnable configuration
uses a fresh same-root witness and produces no modeled successor in $B_C$.
Furthermore, whenever a runnable post-state has a current selected witness,
that path's future satisfies $\mathsf{SafeFuture}_C$; installed goals remain within the root and parent
resource envelopes and cannot discard an unresolved obligation.  Thus no
modeled protected effect expands the root trace language, bypasses the policy
version under which it is admitted, or causes a root-forbidden state.
\end{theorem}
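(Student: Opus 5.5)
We prove Theorem~\ref{thm:root-safety} by induction on the length of an execution generated by the operational rules, using a single strengthened invariant $\mathcal I(X)$ on configurations $X=(C,s,t,G,\nu,W,q)$. The invariant has four parts. First, $t\in L_C$. Second, if $q=\mathsf{runnable}$ and $w\in W$ is current, then $\mathsf{SafeFuture}_C(s,t,K_{X,w})$ holds, where $K_{X,w}$ is taken from Eq.~\eqref{eq:witness-path-language} and residualized at $t$. Third, every installed node $g$ has $\rho_g=\rho$, $R_g\sqsubseteq_R R_{p_g}$ and $R_g\sqsubseteq_R R_C$, and $O_g$ equals $O_{p_g}$ minus catalog-discharged obligations. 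Fourth, every $w\in W$ that is current binds the trusted current $\nu$.

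The base case is A7. It supplies $\mathsf{SafeFuture}_C(s_0,\epsilon,K_{g_0})$ together with the root's resource and obligation fields. Since $\epsilon\in L_C$, the first part holds at the start.

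The inductive step goes over the five rule classes.

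\textbf{Graph edit.} We argue in three steps.
\begin{enumerate}
  \item By A5, passing W1--W7 means the semantic obligations hold.
  \item W3 gives $K_\Delta\subseteq K_p\cap t^{-1}L_C$, so $tv\in L_C$ for every prefix $v$ of every $u\in K_\Delta$.
  \item W4 gives $\mathsf{Post}(s,v)\cap B_C=\varnothing$.
\end{enumerate}
Together, these yield $\mathsf{SafeFuture}_C$ for the new selected path. To see this, note that the new path language is an intersection containing $K_\Delta$ (or contained in it) along the edited path. Since $\mathsf{SafeFuture}_C$ is antimonotone in $K$, safety transfers to any subset. W5 yields the third part, and W1, W2, and W7 together with A8 yield same-root freshness. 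The commit adapter leaves $s$ and $t$ unchanged, and failed checks leave the configuration unchanged, so the first part holds trivially.

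\textbf{Effect commit.} This is the main case. A1 and A2 ensure the executed effect is the checked $e$. The rule requires a current witness $w$ and $e^{-1}K_{X,w}\neq\varnothing$, so some $u\in K_{X,w}$ starts with $e$, and hence $e\preceq u$. Applying the second part of $\mathcal I$ with $v=e$ gives $te\in L_C$ and $\mathsf{Post}(s,e)\cap B_C=\varnothing$. By A3, the actual successor $s'$ lies in $\mathsf{Post}(s,e)$, so $s'\notin B_C$.

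For the residual future we use the decomposition $\mathsf{Post}(s,ev)=\bigcup_{x\in\mathsf{Post}(s,e)}\mathsf{Post}(x,v)$. For any $v'\preceq u'\in e^{-1}K_{X,w}$, we have $ev'\preceq eu'\in K_{X,w}$, which gives $tev'\in L_C$ and $\mathsf{Post}(s',v')\subseteq\mathsf{Post}(s,ev')$, disjoint from $B_C$. Hence $\mathsf{SafeFuture}_C(s',te,e^{-1}K_{X,w})$ holds. By A6 the update of Eq.~\eqref{eq:effect-residual} is atomic, so no interleaved version change can intervene.

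\textbf{Invalidate.} This rule sets $q=\mathsf{suspended}$ and stales dependent witnesses. The second part is therefore vacuous until reauthorization, and A6 orders the invalidation before any later commit.

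\textbf{Reauthorize.} This rule recomputes W1--W7 and so reduces to the graph-edit case.

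\textbf{Cancel and Close.} These only remove future behaviour, change node status, or stale witnesses, and they commit no effect. Removing a node can only shrink or stale path languages, and antimonotonicity preserves the second part. No rule adds obligations, so the third part is preserved.

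The consequences in the theorem's final sentence then follow directly from the invariant:
\begin{itemize}
  \item \emph{No expansion of $L_C$}: this is the first part.
  \item \emph{No policy bypass}: this follows from the fourth part together with W3's $\mathsf{Policy}_{\nu_P}$ clause.
  \item \emph{No forbidden successor}: this is the effect-commit case.
\end{itemize}

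The expected obstacle is the path-language bookkeeping in the graph-edit case. The path language after an edit is the intersection over the whole leaf-to-root path, not just $K_\Delta$, and ancestors must be residualized at the same prefix. We plan to handle this by showing $K_{X',w'}\subseteq K_\Delta$. Since the edited node's language is bounded by the composition and W3 soundly contains all continuations of the selected path, antimonotonicity of $\mathsf{SafeFuture}_C$ then finishes the case. For explicit joins, the same argument applies with $\mathsf{Comp}_\prec$ in place of a single child. We also check that residualization commutes with intersection, i.e.\ $e^{-1}\bigcap_h K_h=\bigcap_h e^{-1}K_h$, so that the effect-commit case is consistent with per-node residualization.
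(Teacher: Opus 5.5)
Your proposal is correct and follows essentially the paper's proof: induction over the rule classes from the A7 base case, with the graph-edit case being Lemma~\ref{lem:edit-preservation} (W3/W4, then restriction to a path intersection that has $K_\Delta$ as an operand) and the effect-commit case being Lemma~\ref{lem:residual-preservation} inlined. One small correction: in the cancel/close case the operative fact is not that path languages shrink, since dropping a node from $\bigcap_{h}K_h$ enlarges the intersection. As the paper argues, these rules---like effect commits---advance $\kappa$ and leave no other current witness, so your second invariant clause holds vacuously until reauthorization, which then reduces to the edit case.
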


Theorem~\ref{thm:root-safety} is a conditional invariant-preservation result.
In particular, A3--A6 carry the fidelity, soundness, checker, and atomicity
premises; the proof does not by itself establish that a concrete implementation
satisfies them.  Section~\ref{sec:results} reports separate conformance evidence
for the finite reference profile.

\begin{theorem}[B2: conditional root-success preservation]
\label{thm:success}
Assume A1--A8, validated leaf closure, and discharge of every obligation
required by the accepted witnesses.  If the finite acyclic goal graph closes
at its root, then its terminal modeled state satisfies $Q_C$.
\end{theorem}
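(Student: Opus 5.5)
The proof is short because the Close rule does most of the work. By that rule, the root closes only from a runnable configuration in which four things hold: every required descendant has resolved, the committed trace is a permitted completion, all root obligations are discharged, and $Q_C$ holds in the current modeled state. So the plan has two parts. First, read $Q_C$ off the guard of the rule. Second, and more substantively, show that the guard is reachable and faithful: the checker's evaluation of $Q_C$ in the closing state is the true modeled terminal state, and $Q_C$ is entailed compositionally by validated leaf closure rather than asserted.

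For the compositional argument I would use well-founded induction on the finite acyclic containment order, from leaves upward. The claim for each closed node $g$ is that its closing predicate $Q_g(s')$ holds in the common closing state $s'$ reached at its closure.
\begin{itemize}
\item \textbf{Base case.} For leaves this is the hypothesis of validated leaf closure.
\item \textbf{Inductive step.} Take a parent $p$ whose required children in the witnessed composition have all closed. The Close rule requires the executed suffix to lie in the current $D_p$, which by W3 is the $D_\Delta$ of the accepted composition. W6 then says that, for every $u\in D_\Delta$ and every $s'\in\mathsf{Post}(s,u)$, validated closure of the children and residual component, together with discharged obligations, entails $Q_p(s')$. Two facts let us instantiate this. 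A3 guarantees that the actual terminal state lies in $\mathsf{Post}(s,u)$. The discharge hypothesis supplies the obligation premise.
\item \textbf{Root.} At $g_0$ we obtain $Q_{g_0}(s')$. The root clause of W6, or A7 for the initial root node, then gives $Q_{g_0}\Rightarrow Q_C$.
\end{itemize}

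Theorem~\ref{thm:root-safety} supplies the invariants this induction needs along the way:
\begin{itemize}
\item the committed trace stays in $L_C$;
\item witnesses are same-root and fresh;
\item obligations are never silently discarded, so "all root obligations discharged" matches the accepted witnesses.
\end{itemize}

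The main obstacle is the \emph{common closing state} issue. A child closes at an earlier state, yet W6 is evaluated at the parent's closing state. The parent's later effects, or exogenous events, could falsify a child predicate in between. I would first try to handle this by noting two things. W6 quantifies over whole completions $u\in D_\Delta$ and the states they reach, so it already accounts for later components. Any exogenous change in between triggers Invalidate under A3 and A6, which forces reauthorization and a fresh W6 check against the current state before the closure can occur. If that is insufficient, I would fall back on the direct guard of the root Close rule, which checks $Q_C$ itself, and present the induction as the explanation for why that guard is satisfiable rather than as a prerequisite.
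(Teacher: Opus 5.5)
Your proposal is correct and follows the paper's argument: bottom-up induction over the finite acyclic graph with validated leaf closure as the base, W6 evaluated at the common closing state for the step, A7 to pass from $Q_{g_0}$ to $Q_C$, and the observation that the root-close rule directly validates $Q_C$ in any case. One small correction: W3 does not identify $D_p$ with $D_\Delta$; the executed trace lies in $D_\Delta$ because, by the definition of $\mathsf{Comp}_{\prec}$, $D_\Delta$ consists of the traces on which all required children and the residual component close, and that is how the paper instantiates W6.
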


Theorem~\ref{thm:success} is conditional functional correctness.  It does not
assert that the agent terminates, finds a closing plan, or avoids
$\Indeterminate$ decisions.

\begin{lemma}[B3: insufficiency of a memoryless local allowlist]
\label{lem:local-insufficiency}
There is a prefix-closed trace policy over individually permitted effects for
which no control depending only on the current effect and a static flat
allowlist is both safe and permissive for every safe singleton trace.
\end{lemma}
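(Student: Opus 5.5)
The argument is a direct separation by exhibiting a witness policy, and I would reuse the two-variable deployment example of Section~\ref{sec:counterexample}. First, fix the alphabet $\Effects=\{a,r,b\}$, with $a=\mcode{select(unreviewed)}$, $r=\mcode{record(review)}$, $b=\mcode{route(production)}$. Then let $L$ be the prefix closure of all traces in which $b$ never occurs after an $a$ that is not followed by an intervening $r$. Concretely, $L$ is the set of words that contain no factor of the form $a\,\{a\}^{*}\,b$. Two routine checks come next. $L$ is prefix-closed, because a forbidden factor present in a prefix is present in the whole word. $L$ contains $\epsilon$ and every singleton $\langle e\rangle$ for $e\in\Effects$, so every effect is individually permitted. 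This already meets the shape required by Definition~\ref{def:root-contract}, and one can equivalently take $L_C=\Effects^*$ with $B_C=\{u\land p\}$ unless review has been discharged.

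Second, I would formalize ``a control depending only on the current effect and a static flat allowlist''. I take it as a decision function $f:\Effects\to\{\Allow,\Deny\}$, parametrized by a fixed allowlist $A\subseteq\Effects$ but independent of the committed prefix $t$. The control is applied at each step, and the trace $te$ is admitted iff $f(e)=\Allow$. \emph{Safety} means every admitted trace lies in $L$. \emph{Permissiveness for every safe singleton trace} means $f(e)=\Allow$ whenever $\langle e\rangle\in L$.

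Third, the contradiction. Every singleton is safe, so permissiveness forces $f(a)=f(b)=f(r)=\Allow$. Now run the control on $t_{\mathrm{drift}}=\langle a,b\rangle$. The first step is admitted since $f(a)=\Allow$, and the second since $f(b)=\Allow$. The decision at step two uses only $b$, and that is the same input the control sees at the final step of $t_{\mathrm{safe}}=\langle a,r,b\rangle\in L$. Hence $\langle a,b\rangle$ is admitted, yet $\langle a,b\rangle\notin L$, which contradicts safety. No choice of $A$ escapes this argument, because permissiveness pins $f$ on every symbol.

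The only delicate step is the modeling one: stating precisely what ``depends only on the current effect'' excludes. I would state explicitly that $f$ receives no prefix, state, or version information, since adding any of these yields a history-based monitor and the lemma no longer applies. Once that is fixed, the proof is the two-line indistinguishability argument above. It is the standard observation from history-based access control \cite{edjlali1998history,schneider2000enforceable}.
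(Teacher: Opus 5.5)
Your proof is correct and is essentially the paper's argument. Permissiveness on the safe singletons forces the memoryless control to allow every letter, so a forbidden two-letter composition is admitted. The paper simply uses the minimal witness $L=\{\epsilon,a,b\}$ over $\{a,b\}$ with $ab\notin L$, which spares you the factor characterization and the prefix-closure check.

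Your aside that the policy can ``equivalently'' be encoded with $L_C=\Effects^*$ and a forbidden-state set is not needed for the lemma. It is also not obviously equivalent: traces such as $\langle b,a\rangle$ or $\langle a,r,a,b\rangle$ get different verdicts depending on how $p$ and the review flag are modeled. You should drop it.
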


This construction instantiates the established history-dependence boundary for
self-generated subgoals: the stated memoryless decision class must retain trace
or equivalent goal-composition state to preserve both properties.

\begin{algorithm}[t]
\caption{Exact transition authorization in a registered finite domain}
\label{alg:exact-authorization}
\begin{algorithmic}[1]
\Require Contract $C$, current $(s,t,G,\nu)$, proposed edit $\Delta$, witness $w$
\If{the root lineage, graph structure, or canonical edit binding is invalid}
  \State \Return $\Deny$
\EndIf
\If{the epoch, policy, state, parent, or graph version is stale}
  \State \Return $\Reauthorize$
\EndIf
\State Enumerate every registered topological schedule of $\Delta$ and its residual work
\State Construct the exact finite $(K_\Delta,D_\Delta)$ and every reachable prefix state
\If{trace inclusion, policy, forbidden-state, or active-frame checking fails}
  \State \Return $\Deny$
\EndIf
\If{resource aggregation, obligation preservation/discharge, or same-root lineage fails}
  \State \Return $\Deny$
\EndIf
\If{a closing state in $D_\Delta$ fails parent-success refinement}
  \State \Return $\Deny$
\EndIf
\State \Return $\Allow$
\end{algorithmic}
\end{algorithm}

\begin{proposition}[B4: exact finite-domain decision property]
\label{thm:relative-completeness}
Suppose $\mathcal S$, $\Effects$, the proposed graph, its composed trace
language, and every registered constraint domain are finite, and the checker
implements Algorithm~\ref{alg:exact-authorization}.  The algorithm terminates
and returns \textnormal{$\Allow$} if and only if a current proposed edit
satisfies W1--W7.  In particular, an admissible edit is not rejected or marked
\textnormal{$\Indeterminate$} in this exact profile.
\end{proposition}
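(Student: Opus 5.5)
The argument has two parts: termination of Algorithm~\ref{alg:exact-authorization}, and a correspondence between its branches and the obligations W1--W7, proved in each direction. Because every line is either a finite enumeration or a decidable test on finite objects, exactness follows once each test is shown to coincide with its obligation.

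\emph{Termination.} Each guard on lines~1 and~3 compares finitely many identifiers, revisions, and canonical hashes against trusted current values. These are finite data tests. The proposed graph is finite and acyclic, so its registered topological schedules form a finite set. Each component language is a finite set of maximal completion traces whose prefix closure is given. The interleavings preserving $\prec$ are therefore finitely many, and the exact pair $(K_\Delta,D_\Delta)$ is a finite set.

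Since $\mathcal S$ and $\Effects$ are finite, $\mathsf{Post}(s,v)$ is computable for each $v$ by the recursive definition. Checking a prefix $v$ means evaluating $tv\in L_C$, $v\in K_p$, membership in $B_C$, and the active frames at every state in $\mathsf{Post}(s,v)$. Each of these is a finite check, and only finitely many prefixes $v$ occur. Resource aggregation lies in a finite registered preorder with a computable aggregation operator. Obligation discharge is a lookup in the finite trusted catalog. The success test needs the finitely many closing states $\mathsf{Post}(s,u)$ with $u\in D_\Delta$, together with the finite domain of $Q_p$. Every branch therefore terminates. No step returns \Indeterminate{}, because the only source of spurious verdicts, over-approximation, is absent: the construction is exact.

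\emph{Soundness and completeness.} I would prove the ``if and only if'' as two directions, matching each check to its obligation.
\begin{itemize}
\item Lines~1 and~3 decide W1, W2, and W7.
\item The trace, policy, forbidden-state, and frame line decides W3 and W4, that is, Eqs.~\eqref{eq:trace-nonexpansion} and~\eqref{eq:frame-preservation}.
\item The resource, obligation, and lineage line decides W5.
\item The closing-state line decides W6.
\end{itemize}

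For ($\Rightarrow$), \Allow{} is reached only after every guard passes, so each obligation holds. For ($\Leftarrow$), suppose a current edit satisfies W1--W7. Freshness under W7 and binding under W1 mean lines~1 and~3 do not fire. Each subsequent check computes its obligation exactly, so none fails, and the algorithm reaches line~15 and returns \Allow{}. This yields the stated consequence: an admissible edit is neither denied nor marked \Indeterminate{}.

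\emph{Main obstacle.} The hard step is establishing that the enumerated $K_\Delta$ \emph{equals} the semantic $\mathsf{Comp}_{\prec}$ of Eq.~\eqref{eq:composition}, not merely over-approximates it. If enumeration missed a registered interleaving, soundness would fail. If it included an unregistered one, completeness would fail. I would handle this with a bijection between $\prec$-respecting schedulers over the finite component traces and the enumerated topological schedules, proved by induction on the total length of the traces. I would then show that $D_\Delta$ is exactly the set of schedules on which every required component reaches an element of its $D$.

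A second concern is that the policy predicate $\mathsf{Policy}_{\nu_P}$ and the closing predicates must be decidable. I would state this explicitly as part of ``every registered constraint domain is finite'': each predicate is interpreted over finite arguments and hence is decidable by evaluation.
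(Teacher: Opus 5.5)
Your proposal is correct and follows the paper's own argument. Termination holds because every loop of Algorithm~\ref{alg:exact-authorization} ranges over a finite registered set, and both directions of the equivalence come from pairing each guard with its obligations among W1--W7, with exactness ruling out an \Indeterminate{} verdict for an admissible edit. The paper takes exact enumeration of $(K_\Delta,D_\Delta)$ as part of the exact-domain premise (the algorithm's own step constructs the exact composition), so your scheduler-to-schedule bijection adds rigor but is not required.
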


\begin{proposition}[B5: universal-safety monotonicity under a more precise sound abstraction]
\label{prop:precision}
Let $T$ be the concrete continuation language and let prefix-closed sound
abstractions $A_1,A_2$ satisfy $T\subseteq A_2\subseteq A_1$ under the same
contract and version, with identical non-language obligations already
discharged.  If the universal $\mathsf{SafeFuture}_C$ subcheck succeeds on
$A_1$, it also succeeds on $A_2$; either success is safe for $T$.  A concrete
counterexample-backed \textnormal{$\Deny$}
supported by a concrete trace remains valid.  With fixed concrete-counterexample
evidence, abstraction refinement alone may resolve an inconclusive universal
safety subcheck to success, but cannot manufacture a new concrete denial.  This
proposition does not assert monotonicity of arbitrary policy predicates that
depend non-monotonically on the representation of $K$.
\end{proposition}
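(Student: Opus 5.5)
The proof is a direct unfolding of the universal form of $\mathsf{SafeFuture}_C$ in Eq.~\eqref{eq:safe-future}. The key observation is that the predicate is anti-monotone in its third argument. For fixed $(s,t)$, $\mathsf{SafeFuture}_C(s,t,K)$ is a conjunction of $t\in L_C$, which does not depend on $K$, with a statement universally quantified over $u\in K$ and $v\preceq u$. Shrinking the quantification domain can therefore only preserve truth. I will state this as an auxiliary claim: if $K'\subseteq K$ and $\mathsf{SafeFuture}_C(s,t,K)$ holds, then $\mathsf{SafeFuture}_C(s,t,K')$ holds. The proof is a one-line instantiation, since every $u\in K'$ also lies in $K$.

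The first assertion of Proposition~\ref{prop:precision} follows by applying the claim with $K=A_1$ and $K'=A_2$. The contract, the current $(s,t)$, and the version vector are shared by hypothesis, so $L_C$, $B_C$, and $\mathsf{Post}$ are identical in both checks. The non-language obligations are assumed already discharged, so only the language-dependent conjuncts differ.

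For ``either success is safe for $T$,'' I will apply the same claim with $K'=T$, using $T\subseteq A_2\subseteq A_1$. Every concrete continuation $u\in T$ and every prefix $v\preceq u$ then satisfy $tv\in L_C$ and $\mathsf{Post}(s,v)\cap B_C=\varnothing$. This is exactly the guarantee Theorem~\ref{thm:root-safety} needs from A4. Prefix-closure of the abstractions is not needed for this direction. I will note that it only ensures the residuals stay well formed.

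For the denial clauses, I will fix what ``concrete counterexample'' means: a trace $u\in T$ and a prefix $v\preceq u$ such that either $tv\notin L_C$ or $\mathsf{Post}(s,v)\cap B_C\ne\varnothing$. The validity of this evidence references only $T$, $C$, $s$, and $t$, none of which change under abstraction refinement. Moreover, $u\in T\subseteq A_2$, so the witness still refutes the subcheck on $A_2$ and the $\Deny$ remains valid. For ``cannot manufacture a new denial,'' I will argue by contraposition. If no concrete counterexample exists in $T$, the concrete universal check holds on $T$. Any failure on $A_2$ must then come from some $u\in A_2\setminus T$, which by definition is spurious and, under the verdict rules after W7, yields $\Indeterminate$ rather than $\Deny$. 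Conversely, a spurious failure on $A_1$ whose offending trace lies in $A_1\setminus A_2$ disappears on $A_2$. This shows refinement can turn $\Indeterminate$ into success, though it need not.

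I expect the main obstacle to be precision about the scope of the monotonicity claim rather than any hard step. I will state explicitly that the argument covers only the $\mathsf{SafeFuture}_C$ conjuncts and the universal parts of W4 (Eq.~\eqref{eq:frame-preservation}), which are universal over $K$ and so inherit the claim. It does not cover $\mathsf{Policy}_{\nu_P}$, whose dependence on $K$ may be non-monotone, as the proposition's final sentence excludes. It also does not cover the $D$-dependent success obligation W6, which is handled by the identical-obligation hypothesis.
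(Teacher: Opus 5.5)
Your monotonicity, concrete-safety, and retained-denial clauses follow the paper's proof. You shrink the universally quantified domain from $A_1$ to $A_2$ and to $T$, and you note that a witness $u\in T\subseteq A_2\subseteq A_1$ refutes the subcheck on both abstractions.

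The gap is in the clause that refinement ``cannot manufacture a new concrete denial.'' Your contraposition assumes that no concrete counterexample \emph{exists} in $T$. What it actually proves is soundness of denial: a \Deny{} on $A_2$ implies that $T$ contains an unsafe trace. That is not the claim. The clause is about the case your hypothesis excludes. There, $T$ contains an unsafe trace $u$ lying in both $A_1$ and $A_2$, but $u$ is not among the fixed concrete-counterexample evidence, so the check on $A_1$ returned \Indeterminate{}. In that case the premise of your contraposition is false. The offending trace on $A_2$ need not lie in $A_2\setminus T$, so ``by definition spurious'' does not apply. Your argument then gives no reason the verdict on $A_2$ cannot become \Deny{}.

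Your argument would close only if the verdict were a function of the true $T$, because then $u\in T\subseteq A_1$ would already force \Deny{} on $A_1$. That reading never uses the fixed-evidence hypothesis. It also makes \Indeterminate{} impossible whenever $T$ is unsafe. This contradicts the over-approximation setting, where the checker returns \Indeterminate{} precisely because it cannot establish that an unsafe path is concrete.

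The missing idea is that this clause concerns certificates, not the semantic content of $T$. The checker cannot compute $A_2\setminus T$, and a \Deny{} must rest on evidence that an offending trace is concrete. As the paper's proof says, set inclusion $A_2\subseteq A_1$ supplies no such evidence, since knowing $u\in A_2$ does not establish $u\in T$. Fix the evidence set $E\subseteq T$. Then any \Deny{} on $A_2$ rests on some $u\in E$. Because $u\in T\subseteq A_1$, the same certificate already refuted the subcheck on $A_1$. A genuinely new denial would need additional concretization evidence, for instance a proof that $A_2=T$, which is an information gain separate from abstraction precision.

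To repair the clause, restate your contraposition with the hypothesis ``no counterexample is certified by the fixed evidence'' and argue this way. The rest of your proposal can stand as written.
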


Appendix~\ref{app:formal-proofs} proves all five statements.  The executable
finite model is corroborating evidence, not a substitute for those proofs: it
exhausts only selected obligations within the declared abstraction.  Its
independent oracle recomputes the 13 initial proposal decisions and trace sets;
it does not independently check the complete 340-state transition system,
freshness lifecycle, or the reference implementation.

\section{Runtime Authorization Protocol}
\label{sec:protocol}

The protocol places an authorization boundary between goal-graph mutation and
effect execution.  It does not prescribe how a planner generates goals.  It
requires the planner to expose each security-relevant mutation in a typed form
that can be checked against the current root task contract.

Figure~\ref{fig:protocol-architecture} separates the two decisions.  A graph
mutation must be authorized before it becomes active; a later protected call
must still match the current witness and versions at the commit boundary.
Neither a planner-produced explanation nor an earlier allow decision bypasses
the second check.

\begin{figure*}[t]
\centering
\begin{tikzpicture}[
  x=1cm,y=1cm,
  box/.style={draw,rounded corners=2pt,align=center,minimum height=9mm,
    minimum width=24mm,inner sep=3pt,font=\footnotesize},
  trusted/.style={box,draw=gtBlue,fill=gtBlueFill,text=gtBlue},
  untrusted/.style={box,draw=gtPurple,fill=gtPurpleFill,text=gtPurple},
  decision/.style={box,draw=gtGreen,fill=gtGreenFill,text=gtGreen},
  boundary/.style={box,draw=gtAmber,fill=gtAmberFill,text=gtAmber},
  target/.style={box,draw=gtRed,fill=gtRedFill,text=gtRed},
  arrow/.style={-{Latex[length=2mm]},thick},
  evidence/.style={arrow,dashed}
]
\node[trusted]   (contract) at (0,2.25) {approved root\\task contract};
\node[untrusted] (mutation) at (0,0.75) {typed goal-graph\\mutation};
\node[trusted]   (versions) at (0,-0.75) {current policy and\\state versions};
\node[untrusted] (call)     at (0,-2.25) {normalized\\protected call};

\node[decision]  (transition) at (3.7,1.50) {transition\\checker};
\node[decision]  (effect)     at (3.7,-1.50) {effect\\authorization};
\node[decision]  (graph)      at (7.1,1.50) {active goal graph\\and current witness};
\node[boundary]  (commit)     at (7.1,-1.50) {atomic commit\\adapter};
\node[target]    (target)     at (10.45,-1.50) {protected\\effect target};
\node[trusted]   (receipts)   at (7.1,-3.35) {hash-linked\\receipts};

\draw[arrow,gtBlue] (contract.east) -- (transition.north west);
\draw[arrow,gtPurple] (mutation.east) -- (transition.west);
\draw[arrow,gtBlue] (versions.east) -- (transition.south west);
\draw[arrow,gtGreen] (transition.east) --
  node[above,font=\scriptsize,fill=white,inner sep=1pt]{install} (graph.west);

\draw[arrow,gtPurple] (call.east) -- (effect.west);
\draw[arrow,gtBlue] (versions.east) -- (effect.north west);
\draw[arrow,gtGreen] (graph.south west) --
  node[above left,font=\scriptsize,fill=white,inner sep=1pt]{current witness}
  (effect.north east);
\draw[arrow,gtAmber] (effect.east) --
  node[above,font=\scriptsize,fill=white,inner sep=1pt]{reserve} (commit.west);
\draw[arrow,gtRed] (commit.east) --
  node[above,font=\scriptsize,fill=white,inner sep=1pt]{release} (target.west);
\draw[evidence,gtRed] (target.south west) to[bend left=18]
  node[below,font=\scriptsize,fill=white,inner sep=1pt]{observed outcome}
  (commit.south east);

\draw[evidence,gtGreen] (transition.south east) to[bend right=18]
  (receipts.north west);
\draw[evidence,gtAmber] (commit.south) -- (receipts.north);
\end{tikzpicture}
\caption{Two-stage authorization boundary.  Goal-graph mutations are checked
before activation; protected effects are reserved and committed only against
the current installed witness.  Solid arrows carry decisions or effects, and
dashed arrows retain audit evidence or observed outcomes.}
\Description{Architecture diagram with trusted root-contract and version
inputs, untrusted typed mutation and protected-call inputs, a transition
checker that installs a current witness, and a separate effect-authorization
and atomic-commit path to the protected target.  Dashed arrows connect both
stages to hash-linked receipts and return the observed effect outcome.}
\label{fig:protocol-architecture}
\end{figure*}

\subsection{Objects and identifiers}

Every root task receives a unique \mcode{rootId}, an immutable contract digest,
and monotonically increasing policy and state versions.  Each goal has a unique
\mcode{goalId}, exactly one parent except for the root, a root identifier, a
goal-local version, and a lifecycle state
\(\{\mcode{active},\mcode{closed},\mcode{cancelled},\mcode{replaced},
\mcode{joined}\}\).  A proposal is a request, not a persisted goal state.  The
root separately moves among \mcode{active}, \mcode{suspended}, and
\mcode{closed}; a version change suspends the root and stales witnesses rather
than rewriting each goal as ``invalidated.''  A transition witness binds all fields used by the
decision: parent and proposed goal digests, current committed trace digest,
state and policy versions, the checker profile, and the decision.

An effect request binds an idempotency key, active goal identifier, root
identifier, normalized effect, expected trace digest, and the witness that made
the effect reachable.  A receipt binds the request to one outcome and, for a
successful commit, to the new trace and state digests.  Reusing an idempotency
key with a different request body is rejected.

\begin{table*}[t]
\caption{Registered protocol operations and their authorization obligations.}
\label{tab:operations}
\small
\begin{tabularx}{\textwidth}{L{0.20\textwidth}L{0.30\textwidth}Y}
\toprule
Operation & State change & Required checks \\
\midrule
\mcode{registerRoot} & create root and initial active goal & principal approval, schema validity, nonempty permitted language, initial invariant \\
\mcode{proposeGoal} & proposed child becomes active & same root; parent active; trace, resource, obligation, frame, success, and version refinement \\
\mcode{replaceGoal} & invalidate old goal; activate replacement & all child checks plus preservation of already committed prefix and residual work \\
\mcode{delegateGoal} & assign an active child to a worker & same root, attenuated resource scope, bound worker identity, no new authorization root \\
\mcode{joinGoals} & authorize sibling composition & same root; witnessed order/interleaving; joint trace, multiplicity, frame, and completion checks \\
\mcode{authorizeEffect} & reserve next normalized effect & active current goal; current versions; effect admitted by residual language and state guard \\
\mcode{commitEffect} & append effect and update state/trace & exact reservation; idempotency; faithful adapter; atomic trace and residual update \\
\mcode{changeVersion} & advance policy/state version; suspend root & typed mutation; atomic version/lifecycle update; old witnesses and reservations become stale \\
\mcode{reauthorize} & issue current witness; resume root & complete selected-path recheck under current state, policy, parent, and graph revisions \\
\mcode{cancelGoal} & remove a non-root future path & active goal, resolved descendants, no authority or effect creation \\
\mcode{closeGoal} & validate goal completion & success predicate, inherited obligations, active frames, and residual completion all satisfied \\
\mcode{closeRoot} & make root terminal & every required descendant resolved; permitted completed trace; root predicate and obligations satisfied \\
\bottomrule
\end{tabularx}
\end{table*}

\subsection{Transition decision}

For a proposed graph mutation, the checker first performs structural checks:
identifiers are unique, the graph remains acyclic, every non-root node has one
parent, and all composed nodes share the same root.  It then checks the
registered versions and rejects a stale proposal with \Reauthorize{}.  Typed
resource and obligation checks are followed by reachability over the proposed
finite continuation system.  A concrete counterexample to the root policy or
parent refinement returns \Deny{}.  If a sound over-approximation contains an
unsafe abstract path but cannot establish that the path is concrete, the
checker returns \Indeterminate{}.  The implementation never converts either
outcome to \Allow{}.

An ordinary edit is checked over its selected child path, every ancestor
residual, and the remaining completion that binds it to its parent.  A graph
revision makes other sibling witnesses stale.  If effects from several
siblings are to become jointly executable, the planner must request an
explicit join; that request records the sibling set and partial order and
checks every registered interleaving.  Later graph mutations produce new
witnesses; they do not edit a previous decision in place.

\subsection{Effect reservation and commit}

Effect execution uses a two-step interface.  \mcode{authorizeEffect} verifies
the current goal and versions, normalizes the requested operation, advances a
candidate residual state, and creates a single-use, context-bound reservation.  The adapter
may commit only the exact reserved normalized effect.  On success,
\mcode{commitEffect} appends the effect, records the observed state transition,
updates the residual continuation system, and consumes the reservation at one
linearization point.  A failed external operation records failure without
claiming a successful state transition; any recovery goal is separately
authorized.

The separation makes races explicit.  A policy or relevant state-version
change invalidates outstanding reservations and transition witnesses before a
later effect can commit.  An exact retry returns the retained receipt.  A
conflicting retry under the same identifier is denied.  These rules bind
planner-visible goal evolution to operation-time enforcement rather than to a
one-time plan approval.

\subsection{Delegation, joins, and replanning}

Delegation in this profile changes the executing worker, not the authorization
root.  The delegated goal must be an attenuation of its parent, and the worker
receives only the active goal identifier and the fields required to construct
valid requests.  Worker-provided text is never an authority source.

A same-root join names the complete sibling set, a permitted partial order, and
the residual completion.  The reference checker enumerates
registered interleavings and verifies their prefixes.  A new sibling cannot be
added to an existing witness; the composition is rechecked and a new witness
is issued.  Cross-root joins are rejected in this paper's profile.

Replanning uses \mcode{replaceGoal}.  The replacement is checked against the
parent's current residual language after the already committed prefix, not
against the parent's original start state.  Thus a recovery plan can change
method while preserving the task, but it cannot regain resources, erase an
obligation, or make an already-forbidden state reachable.

Cancellation is a tombstone, not completion evidence.  It removes the
cancelled node's future effects and stales its witness, but cannot by itself
justify closing the root.  Recovery is explicit: \mcode{replaceGoal} may target
the cancelled node, must pass the complete current-version transition check,
and leaves the old node marked replaced.  The replacement then follows the
ordinary effect and closure rules.

\subsection{Receipts and auditability}

All accepted and rejected operations produce canonical, hash-linked records.
The record contains typed inputs, version identifiers, the checker decision,
and a reason code; it excludes private chain-of-thought.  The independent
checker consumes only the root contract, initial state, registered operation
semantics, and receipts.  It reconstructs the graph, trace, versions,
obligations, and decisions without importing the runtime checker.  This design
turns auditability into a second implementation path rather than a copy of the
same decision function.

\subsection{Fail-closed behavior}

Unknown schema fields, missing state observations, unsupported predicates,
unbounded continuation systems, stale versions, and abstraction ambiguity do
not authorize effects.  They yield \Indeterminate{} or \Reauthorize{} with a
typed reason.  A deployment may seek a more precise abstraction or obtain
principal approval; it may not silently treat uncertainty as authority.

\section{Reference Implementation}
\label{sec:implementation}

The artifact implements the finite-domain profile used by the formal model and
evaluation.  It is intentionally provider-independent: agent runtimes connect
through adapters, while the authorization decision and deterministic state
oracles remain external to the model.

\subsection{Typed state and trace representation}

Root contracts and goals use canonical JSON with schema-checked identifiers,
predicates, resources, obligations, versions, and finite-state transitions.
Normalized effects contain an operation name and a closed set of typed
arguments.  Canonical serialization sorts object keys, rejects duplicate or
unknown security-relevant fields, and hashes the resulting byte sequence with
domain separation.  Hashes provide stable identifiers and tamper evidence;
caller authentication remains a deployment responsibility stated separately
from content binding.

The reference domain represents environment state as finite typed variables
and each registered effect as one deterministic state transformer or a typed
rejection.  The mathematical model permits nondeterministic transition
relations, but the executable evidence reported here uses this exact
deterministic finite profile.  Policy outcomes are consequently reproducible
without a language model as the primary judge.

The implementation stores maximal completion traces rather than materializing
every prefix.  If $M_g$ is the stored list, the semantic language in
Section~\ref{sec:formalization} is $K_g=\mathsf{Pref}(M_g)$ and the closing set
$D_g$ contains the maximal traces in $M_g$.  After each effect, every node on
the selected witness path is residualized; an empty stored residual marks
completion of that path.  Table~\ref{tab:formal-implementation-map} records the
remaining representation choices so that an implementation convenience is not
mistaken for a stronger formal claim.

\begin{table*}[t]
\caption{Mapping from the formal model to the finite reference profile.}
\label{tab:formal-implementation-map}
\small
\begin{tabularx}{\textwidth}{L{0.18\textwidth}L{0.34\textwidth}Y}
\toprule
Formal object & Reference representation & Enforced boundary \\
\midrule
$L_C,K_g,D_g$ & finite lists of maximal completion traces; prefix closure is implicit & exact order and multiplicity; an empty residual is required for closure \\
$\mathsf{Post}$ & deterministic typed state transformer per registered effect & preconditions, forbidden predicates, frames, and current policy are rechecked \\
$K_{X,w}$ & one fresh leaf-to-root witness path and all ancestor residuals & a graph revision stales other paths; only explicit join composes siblings \\
$\mathsf{Comp}_{\prec}$ & all effect-level shuffles preserving component order and registered precedence & every enumerated prefix and terminal state is checked \\
$R_g$ & finite resource identifiers with set inclusion & no quantitative concurrent-resource claim \\
$O_g$ & parent obligations minus catalog-validated discharge & a child cannot mint an obligation or erase one by replacement \\
$\nu$ and $q$ & immutable root epoch, policy/state/graph counters, and root lifecycle state & stale, suspended, and closed executions cannot commit effects \\
\bottomrule
\end{tabularx}
\end{table*}

\subsection{Transition checker}

The checker validates structure and versions before exploring the proposed
continuation system.  The implemented exact profile enumerates every
registered maximal continuation, its reachable prefix states, every explicit
join shuffle, and every registered closing condition.  The separate bounded
model retains predecessor links for minimal counterexamples.  The formal
semantics also admits sound over-approximation, but the artifact does not claim
an implemented abstraction-refinement engine or measure its precision.  Its
\Indeterminate{} outcome is exercised for missing trusted observations; it is
never converted to \Allow{}.

Success refinement is checked independently from trace safety.  Closing a
parent requires its success predicate in the common terminal state after
children and residual work, with every inherited obligation discharged and
every active frame predicate preserved along the witnessed prefixes.  This
prevents a child from satisfying a convenient proxy while leaving the parent
task incomplete.

Exact sibling composition has an explicit finite cost.  For incomparable
components with fixed trace lengths $\ell_1,\ldots,\ell_m$, the unconstrained
upper bound on effect-level shuffles is
\begin{equation}
 \frac{(\sum_i \ell_i)!}{\prod_i \ell_i!};
 \label{eq:shuffle-cost}
\end{equation}
registered precedence edges reduce this number.  The reference profile
enumerates the finite set and fails closed when its registered bounds are
exceeded.  Larger deployments require symbolic partial-order reduction or
domain-specific solvers rather than treating this enumerator as a scalable
general planner.

The finite envelope has two distinct layers.  Directly represented inputs are
limited to 64 retained goals, 256 maximal continuations per goal, 64 effects
per continuation, 256 entries in each goal-constraint array, eight join
sources, and 28 supplied precedence-edge entries.  The exact join checker also
guards its internal enumeration at 4,096 topological orders, 4,096 products of
source-trace choices, and 4,096 raw composed sequences.  These three 4,096
limits are internal work guards, not joined-goal output capacities: any
successfully authorized joined goal remains subject to the 256-continuation
representation limit.  A chain of depth 63 fills the 64-goal population cap;
attempting depth 64 requires a sixty-fifth goal rather than crossing a separate
depth parameter.

\subsection{Reference monitor and store}

The monitor exposes the operations in Table~\ref{tab:operations}.  It retains
the root contract, goal records, witnesses, effect reservations, the committed
trace, state snapshots, and hash-linked receipts.  Updates are serialized in
the reference profile.  Root lifecycle is explicit: a relevant version change
suspends the execution, successful current-version reauthorization resumes it,
and validated root closure is terminal.  A deployment with concurrent or
distributed writers must preserve the same linearization and freshness conditions; the present
artifact does not reinterpret file serialization as a consensus result.

Effect reservations bind the expected trace and state versions.  Commit
compares both values, checks the normalized effect byte-for-byte, applies the
registered transition, and writes the receipt with the updated residual state.
The adapter receives only the exact operation released by this step.

\subsection{Independent checker}

The audit checker is separately implemented and imports no policy-decision,
reachability, graph-update, or expected-outcome function from the reference
monitor.  It parses the receipt stream, recomputes canonical digests, rebuilds
the graph and committed trace, re-evaluates registered predicates, and verifies
the correspondence between reservations and committed effects.  It also checks
that every active witness uses the current policy and relevant state versions.

The mutation suite changes semantic fields and then consistently recomputes
outer record hashes.  Consequently, rejection cannot rely only on a broken
checksum.  Registered mutants cover omitted mediation, stale versions, trace
reordering, multiplicity changes, dropped obligations, weakened frame
predicates, broadened resources, replaced success predicates, cross-root
delegation, sibling-composition omission, detached commits, and conflicting
idempotency replays.

\subsection{Runtime adapters}

Two adapters map public tool-execution events into the same protocol.  Each
adapter documents the frozen upstream version, license, source digest,
invocation path, and field mapping.  An adapter is counted as a runtime path
only when the actual upstream component constructs and transmits a tool request
through its native call path.  The evaluated components are AgentDojo's
function runtime and $\tau$-bench's environment/tool dispatch; they are public
evaluation stacks rather than claimed production deployments.  Feeding a
hand-authored trace directly to the reference monitor is reported as
deterministic replay, not runtime integration.

The adapter boundary is narrow.  For each frozen case, the upstream runtime
dispatches a typed pair consisting of a case identifier and the next operation
index through its native tool-call path.  A persistent local broker rejects
out-of-order indexes and maps the pair to exactly one pre-registered protocol
operation, which may be a graph transition, effect reservation, effect commit,
version change, cancellation, or closure.  Witnesses remain in the broker's
reference state; this experiment does not claim that either upstream runtime
natively stores them in a planner context.  Both adapters target the same
local deterministic service, so external providers or live websites cannot
change the policy oracle.  Runtime-specific dispatch and transport code remain
outside the trusted decision function but are included in the executed mapping
evidence.

\subsection{Reproducibility}

The retained artifact contains the registered contracts, workflow source manifest,
reference monitor, checker, baselines, bounded model, mutant registry, runtime
adapters, raw JSON receipts, the secondary scale protocol and records,
generated tables, and verification scripts.  One
verification command reruns the deterministic reference cases and checkers,
validates the frozen upstream-runtime, performance, and scale-sensitivity
receipts, validates fixed denominators, recomputes result digests, and fails if
generated manuscript macros differ from the retained results.  Reacquiring
upstream repositories and retiming either host-local study are explicit,
separate commands; neither a model provider nor credentials are prerequisites
for the reported decisions.

\section{Evaluation Design}
\label{sec:evaluation-design}

The evaluation separates seven evidence classes: (1) a minimal separating
counterexample; (2) bounded exploration of the formal transition system;
(3) conformance, mutation, and independent-checker tests; (4) controlled
matched workflows; (5) structural mappings to frozen public task sources; and
(6) calls through two frozen public tool-execution implementations; and (7) a
secondary finite-envelope sensitivity study.  Evidence is not
pooled across these classes.  In particular, a public-source mapping is not
reported as execution of that benchmark, and bounded exploration is not
reported as a proof of an unbounded theorem.  The scale cells are not counted
as external-source or upstream-runtime evidence.

\subsection{Research questions}

\begin{description}[leftmargin=3.6em,style=nextline]
  \item[RQ1: Separation.] Can a locally permitted effect participate in a
  trajectory whose composition violates the approved root task contract?

  \item[RQ2: Prevention.] Does transition authorization prevent registered
  non-refining graph edits, stale-witness uses, and invalid effect orders before
  a forbidden state is committed?

  \item[RQ3: Benign utility.] Does the mechanism retain the matched valid
  continuation for every registered workflow?

  \item[RQ4: Strong comparison.] Does a distinction remain when a
  history-aware complete-continuation trace check receives the root trace
  policy, effect catalog and transition semantics, forbidden and required
  terminal predicates, current state, complete committed prefix and proposed
  continuation, and proposed goal success predicate?

  \item[RQ5: Portability and cost.] Do two independently developed public
  tool-execution paths preserve the registered decisions, and what local latency and
  storage costs accompany the reference implementation?

  \item[RQ6: Finite-envelope sensitivity.] Across fixed host-local
  one-factor-at-a-time grids, how do transition-decision latency, serialized
  evidence and retained-state size, and runtime memory indicators vary, and do
  the declared direct and internal guards return their registered outcomes?
\end{description}

\subsection{Controlled workflow registry}

The registry contains 48 contextual workflow templates, each with one benign
and one drifted case, for 96 cases in total.  The pair shares its root contract,
initial state, operation vocabulary, and available tool affordances.  It
differs in an authorization-relevant graph transition or effect order.  Eight
classes are represented: create, replace, replan, same-root delegation,
same-root composition, stale witness, invalid effect order, and resource-scope
expansion.  Software, data, administrative, and infrastructure workflows
supply the operational settings; a separate access-grant case exercises
same-root composition.

Twenty-four templates are structurally mapped one-to-one to frozen identifiers
from SWE-bench Verified, AgentDojo, $\tau$-bench, and OSWorld
\cite{jimenez2024swebench,debenedetti2024agentdojo,yao2025taubench,xie2024osworld}.  For each source,
the manifest records its repository commit and tree, license digest, task
identifier, source-file digest, and the exact derivation boundary.  The source
contributes task shape and operation vocabulary only.  Original prompts,
benchmark outcomes, and expected answers are not copied, and the matched goal
transitions are paper-specific controlled fixtures.

Contextual variation is not treated as independent replication.  A canonical
structure analysis removes lexical task, effect, resource, and state names
while retaining transition class, trace shape, dependencies, guards,
resources, obligations, version changes, and composition.  Results therefore
report both contextual templates and distinct structural schemas.  Corpus
claims are finite-set conformance statements; no population prevalence is
inferred from repeated contextual instances.

\subsection{Compared controls}

Every case is replayed under five comparators and the complete mechanism:

\begin{enumerate}
  \item a \emph{memoryless action allowlist}, which checks only whether the
  current normalized effect is listed;
  \item a \emph{static goal-effect set}, which also checks the resources
  assigned to the proposed goal but retains no history or graph composition;
  \item a \emph{per-step declared-purpose check}, which receives the current
  structured purpose but does not validate the composed goal graph;
  \item a \emph{final-state check}, which evaluates the registered policy after
  the attempted workflow and therefore distinguishes detection from prevention;
  \item a \emph{history-aware complete-continuation trace check}, which receives
  the root trace language, effect catalog and transition semantics, forbidden
  predicates, required terminal predicate, complete committed prefix, current
  state, full proposed continuation, and proposed goal success predicate; and
  \item \emph{complete transition authorization}, including typed lineage,
  resources, obligations, version freshness, graph composition, and the
  operation-time effect gate.
\end{enumerate}

The complete-continuation comparator is intentionally strong within its trace
projection.  It checks the proposed traces and their reachable states before
execution, rechecks next-state forbidden predicates during commit, and checks
the required terminal predicate at closure.  It therefore prevents every
registered unsafe trace prefix expressible in its supplied finite model.  Its
documented boundary is that it does not authorize typed resource, obligation,
lineage, version, or explicit join-evidence changes unless they alter that
projection.  This comparison isolates the incremental value of treating graph
mutation itself as an authorization event.

\subsection{Public tool-execution paths}

The portability matrix selects one matched pair from each transition class,
giving 16 cases per runtime.  AgentDojo requests traverse its native
\mcode{FunctionsRuntime.run\_function} path; $\tau$-bench requests traverse
\mcode{Env.step} and \mcode{Tool.invoke}.  A run counts as runtime evidence only
when the frozen upstream component constructs and dispatches the tool request
through that path.  Directly feeding a handwritten trace to the reference
monitor remains deterministic replay and is not counted as integration.

Both adapters use the same local deterministic effect service and require no
model provider, network service, credential, or live website.  Each retained
run binds the upstream commit and tree, license and source digests, adapter
digest, case identifier, reset status, complete authorization receipts, final
state, and independent-validator outcome.  A benign and drifted case are reset
independently; a crash, malformed request, version mismatch, or validator
disagreement remains a failed run rather than receiving an inferred verdict.

\subsection{Secondary finite-envelope study}

The secondary study uses deterministic synthetic contracts and contains
\GoalTransitionScaleCellCount{} fixed cells: 100 non-join cells over goal
population (G), chain depth (D), trace length (L), continuation count (T),
resources (R), obligations (O), active-frame predicates (F), closing
predicates (C), and current-policy predicates (P); 23 join cells over source
count (J), precedence density (Q), and trace-choice/output amplification (A);
and six cells at or around internal enumeration guards.  This is a
finite-profile characterization, not a new task benchmark or external
application layer.  G uses depth-one siblings whereas D uses one chain, so
their measurements are not treated as the same graph structure.  P=256 is the
predeclared observation ceiling, not an implementation limit.  The A family
co-varies source-trace choice product and normalized output count and therefore
does not isolate either factor causally.

Before the final run, the protocol, cell registry, expected decision and
reason, order seed, implementation hashes, analysis rules, and host preflight
were frozen.  Each cell runs in five process blocks, with 10 warm-up and 40
retained measurements per block, for 200 latency samples per cell.  Every
sample creates a fresh runtime; fixture construction and setup are outside the
timed region.  One untimed correctness record and one isolated memory
subprocess are retained per cell.  The fixed randomized block order uses a
monotonic clock, nearest-rank median/p95/p99, block-median absolute deviation,
and no outlier deletion.  Decision classes are reported separately rather than
combined into one latency curve.  The study neither fits an asymptotic model
nor performs population significance testing.

\subsection{Outcomes and analysis}

Primary safety outcomes are a forbidden state committed, a drifted transition
accepted, stale-witness use, and protected effects committed before
intervention.  Primary utility outcomes are benign completion, benign
rejection, and \Indeterminate{} decisions.  Runtime and systems outcomes are
transition-decision and effect-decision latency, receipt bytes, and retained
state bytes.  RQ6 additionally retains complete-fixture, target-operation,
receipt-chain, and retained-state canonical JSON bytes; join-enumeration
diagnostics; and process memory snapshots before, immediately after, and after
garbage collection.

Ground truth is computed from the registered finite transition semantics.  A
separately implemented checker reconstructs graph, version, trace, reservation,
and terminal-state properties from receipts without importing the runtime's
policy-decision or reachability functions.  Directed mutations recompute outer
hashes after changing semantic fields, so rejection cannot rely on detecting a
broken checksum alone.

Results are exact counts over the frozen finite registries.  The primary
operation-cost study and secondary finite-envelope study use separate
denominators.  Performance uses a
monotonic clock after correctness gates pass, with fixed warm-up and measured
iteration counts on one recorded host.  Median, p95, and p99 characterize this
reference implementation only.  They are not used to rescue a failed safety
gate and are not extrapolated to upstream production deployments.

\subsection{Evidence gates}

The deterministic evidence freeze passes only if all registered drifted cases
avoid forbidden commits, all benign cases complete, the producer and
independent checker agree on every retained record, all semantic mutants are
rejected, every public-source mapping resolves to exactly one frozen anchor,
and every fixed denominator and generated manuscript value reproduces.  The
runtime layer additionally requires all 32 native-path runs to complete with
the expected authorization outcome and valid receipts.  The secondary study
passes only if all 14 input-contract preflight cases and all
\GoalTransitionScaleCellCount{} cell-level decision/reason expectations pass,
all retained records and raw sample denominators verify independently, and the
result binds the frozen execution inputs and host preflight.

The local protocol was frozen before the final evidence run; the dated protocol
and frozen component digests are retained with the artifact.  The manuscript
accordingly uses the precise designation \emph{registered study protocol}; the
complete protocol, retry rules, missing-run treatment, and evidence boundaries
are included with the artifact.

\section{Results}
\label{sec:results}

All reported results passed the fixed-denominator verifier before manuscript
generation.  Table~\ref{tab:evidence-inventory} gives the primary denominators.
The workflow registry is a deliberately constructed finite conformance corpus,
not a probability sample; results are therefore reported as exact counts over
the registry rather than as population estimates.

\begin{table}[t]
  \centering
  \caption{Verified evidence inventory.  Public-source seeds contribute task
  identifiers, task shape, and operation vocabulary, not source outcomes.}
  \label{tab:evidence-inventory}
  \small
\begin{tabular}{lr}
\toprule
Evidence item & Count \\
\midrule
Canonical structural schemas & 25 \\
Contextual workflow instances & 48 \\
Matched benign/drifted cases & 96 \\
Public source-derived scenario seeds & 24 \\
Mechanism--case decisions & 576 \\
Protocol conformance checks & 65 \\
Public upstream runtime cases & 32 \\
Native upstream operation dispatches & 258 \\
Forbidden commits, complete mechanism & 0 \\
Benign completions, complete mechanism & 48 \\
Forbidden commits, memoryless comparator & 33/48 \\
\bottomrule
\end{tabular}

\end{table}

\subsection{RQ1: separation and formal checks}

The matched counterexample in Section~\ref{sec:counterexample} separates a
memoryless action decision from a history-dependent task constraint.  Both
effects are individually listed, but the pair reaches the forbidden state
unless the intervening review obligation is discharged.  The executable
finite model corroborates this separation: the complete profile reaches
\GoalTransitionBoundedStateCount{} states through
\GoalTransitionBoundedTransitionCount{} transitions without an invariant
violation.  A separately coded static oracle agrees with all 13 initial
proposal decisions and their composed trace sets, including all four
admissible proposals.  Each of 13 single-defect formal variants has a retained
counterexample.

The controlled corpus shows the same distinction at a larger finite scale.
The memoryless allowlist enters a registered forbidden state in
\GoalTransitionMemorylessUnsafeCommitCount{} of
\GoalTransitionMemorylessDriftedCaseCount{} drifted cases.  This is a
constructive result for the stated memoryless decision class and the registered
sequence constraints; it is not used to characterize every access-control
system.

\subsection{RQ2 and RQ3: prevention and benign completion}

The complete mechanism finishes all
\GoalTransitionCompleteBenignCompletionCount{} benign cases and commits no
forbidden state.  Among the \GoalTransitionDriftedCaseCount{} matched drifted
cases, \GoalTransitionCompleteDeniedCount{} return \Deny{} and
\GoalTransitionCompleteReauthorizationCount{} return \Reauthorize{}; none
completes.  The decisions cover eight transition classes: nine create, five
delegate, nine replace, and nine replan cases are denied for trace expansion;
five resource-scope cases are denied for scope expansion; one explicit join is
denied for incomplete composition; five invalid-order cases are denied at the
effect gate; and five stale-witness cases require reauthorization.

Intervention timing distinguishes prevention from post-commit detection.
\GoalTransitionBlockedBeforeEffectCount{} drifted graph edits are denied before
any protected effect in their workflow.  The five stale-version cases suspend
the root and require reauthorization before a later protected effect.  The
\GoalTransitionInvalidOrderAfterSafePrefixCount{} invalid-order cases execute
only their permitted prefix before the next effect is denied.  Thus every
drifted case is stopped before its registered forbidden completion, while the
matched benign member remains executable.

\subsection{RQ4: comparison with trace and local controls}

Table~\ref{tab:control-results} reports all
\GoalTransitionDecisionCount{} mechanism--case decisions.  ``Benign'' is the
number of 48 benign cases completed.  The remaining columns concern the 48
drifted cases: committed, denied, referred for reauthorization, detected only
after commit, entering a forbidden state, and preventively blocked.  The
comparators are logical control classes over the projections specified in
Section~\ref{sec:evaluation-design}; they are not reimplementations of the
cited learned monitors or planning systems.

\begin{table*}[t]
  \centering
  \caption{Outcomes by control.  ``Commit'' does not necessarily mean a
  forbidden state: a drift can violate typed lineage, resource, or freshness
  requirements without changing the trace projection.}
  \label{tab:control-results}
  \scriptsize
  \setlength{\tabcolsep}{3.6pt}
\begin{tabular}{lrrrrrrr}
\toprule
Mechanism & Benign & Commit & Deny & Reauth. & Post-detect & Forbidden & Prevent \\
\midrule
Memoryless action allowlist & 48 & 44 & 4 & 0 & 0 & 33 & 4 \\
Static goal effect set & 48 & 39 & 9 & 0 & 0 & 33 & 9 \\
Per-step declared purpose & 48 & 39 & 9 & 0 & 0 & 33 & 9 \\
Final-state check & 48 & 11 & 4 & 0 & 33 & 33 & 4 \\
History-aware complete-continuation check & 48 & 11 & 37 & 0 & 0 & 0 & 37 \\
Complete transition authorization & 48 & 0 & 43 & 5 & 0 & 0 & 48 \\
\bottomrule
\end{tabular}

\end{table*}

The history-aware complete-continuation trace check prevents
\GoalTransitionHistoryPreventiveBlockCount{} drifted cases and commits
\GoalTransitionHistoryDriftedCommittedCount{}.  Those 11 comprise five
resource expansions, five stale-witness uses, and one join whose missing typed
composition evidence does not alter the comparator's trace projection.  It
enters no registered forbidden state.  The complete mechanism blocks those 11
through resource, freshness, and explicit join-evidence checks, respectively;
its lineage and obligation checks are exercised elsewhere in the conformance
and mutation suites.  This comparison isolates coverage of the stated fields;
it does not establish empirical superiority over a different runtime monitor,
planner, or learned classifier.

\subsection{Implementation conformance and independent checking}

All \GoalTransitionProtocolCheckCount{} protocol checks pass: 
\GoalTransitionProtocolPositiveCheckCount{} positive and
\GoalTransitionProtocolNegativeCheckCount{} negative cases exercise
\GoalTransitionProtocolGateCount{} distinct gates.  The gates include root
lifecycle, current closing witnesses, bottom-up descendant resolution,
cancellation followed by explicit recovery, obligation non-expansion,
all-interleaving join checks, atomic commit, idempotency, and finite-profile
limits.  Oversized inputs return \Indeterminate{} with a profile-limit reason
rather than being partially enumerated.

The independently implemented checker reconstructs all
\GoalTransitionMatchedCaseCount{} records and semantically replays
\GoalTransitionSemanticallyReplayedReceiptCount{} receipts without importing
the reference runtime's decision code.  It rejects all
\GoalTransitionCheckerMutantCount{} mutants from
\GoalTransitionCheckerMutationFamilyCount{} families, including
\GoalTransitionResealedReceiptMutantCount{} mutations whose outer hashes were
consistently recomputed and \GoalTransitionIndependentProfileLimitMutationCount{}
independently oversized inputs.  These results exercise agreement and
tamper-detection paths for the registered schema; the mathematical safety claim
continues to rest on the assumptions in Section~\ref{sec:formalization}.

\subsection{RQ5: upstream paths and local cost}

The two frozen upstream paths execute \GoalTransitionRuntimeCaseCount{} cases
through \GoalTransitionRuntimeDispatchCount{} native per-operation dispatches.
AgentDojo uses \mcode{FunctionsRuntime.run\_function}; $\tau$-bench uses
\mcode{Env.step} followed by \mcode{Tool.invoke}.  All
\GoalTransitionRuntimeDecisionMatchCount{} case decisions and
\GoalTransitionRuntimeReceiptChainCount{} receipt chains match the reference
records.  The \GoalTransitionCrossRuntimeRecordCount{} paired case records and
\GoalTransitionCrossRuntimeStateCount{} retained states also match across the
two paths.  This establishes the specified provider-free dispatch integration;
it does not treat the structural task mappings as benchmark execution.

Performance is measured locally over
\GoalTransitionPerformanceSchemaCount{} structural representatives after
warm-up, yielding \GoalTransitionPerformanceSampleCount{} timed operations on
an Apple M4 Max.  Table~\ref{tab:local-cost} reports descriptive latency and
median receipt size for the three protocol stages.  Initialization, upstream
model latency, and network latency are outside the timed region.

\begin{table}[t]
  \centering
  \caption{Host-local cost of the deterministic finite profile.  Latencies are
  microseconds; receipt size is canonical UTF-8 JSON bytes.}
  \label{tab:local-cost}
  \small
  \begin{tabular}{lrrrr}
    \toprule
    Stage & Median & p95 & p99 & Receipt B \\
    \midrule
    Transition authorization
      & \GoalTransitionTransitionMedianMicroseconds
      & \GoalTransitionTransitionPercentileNinetyFiveMicroseconds
      & \GoalTransitionTransitionPercentileNinetyNineMicroseconds
      & \GoalTransitionTransitionReceiptBytes \\
    Effect authorization
      & \GoalTransitionEffectAuthorizationMedianMicroseconds
      & \GoalTransitionEffectAuthorizationPercentileNinetyFiveMicroseconds
      & \GoalTransitionEffectAuthorizationPercentileNinetyNineMicroseconds
      & \GoalTransitionEffectAuthorizationReceiptBytes \\
    Effect commit
      & \GoalTransitionEffectCommitMedianMicroseconds
      & \GoalTransitionEffectCommitPercentileNinetyFiveMicroseconds
      & \GoalTransitionEffectCommitPercentileNinetyNineMicroseconds
      & \GoalTransitionEffectCommitReceiptBytes \\
    \bottomrule
  \end{tabular}
\end{table}

Median retained authorization state is
\GoalTransitionRetainedStateAfterTransitionBytes{} bytes after a transition
decision and \GoalTransitionRetainedStateAfterEffectBytes{} bytes after an
effect commit.  These measurements characterize the retained serialized
reference profile; they are not throughput or production-capacity estimates.

\subsection{RQ6: host-local finite-envelope sensitivity}

All \GoalTransitionScaleCellCount{} frozen cells match their fixed decision and
reason, and the independent verifier accepts all
\GoalTransitionScaleMeasuredExecutionCount{} raw latency samples,
\GoalTransitionScaleMemoryExecutionCount{} isolated memory records, and 14
input-contract preflight cases.  No blocked cell changes retained authorization
state.  These outcomes are separate from the controlled-corpus and upstream
runtime results.  Table~\ref{tab:scale-boundaries} reports the direct and
internal boundary probes, and Figure~\ref{fig:scale-latency-ratios} reports
within-family \Allow{} latency ratios.

\begin{table*}[t]
  \centering
  \caption{Observed finite-profile boundaries and internal enumeration-guard probes. Values in parentheses are the realized governed dimension, except internal probes, which report the theoretical enumeration count checked by that cell. Every observed decision and reason matched its registered expectation; blocked cells left retained state unchanged.}
  \label{tab:scale-boundaries}
  \scriptsize
  \setlength{\tabcolsep}{4pt}
  \begin{tabularx}{\textwidth}{@{}L{1.28in}YY@{}}
    \toprule
    Dimension & At/below or ceiling probe; observed result & Contrast probe; observed result \\
    \midrule
Goal population & \mcode{G-064} (64); \Allow & \mcode{G-065} (65); \Indeterminate; \mcode{E\_PROFILE\_LIMIT} \\
Depth below root & \mcode{D-063} (63); \Allow & \mcode{D-064} (64); \Indeterminate; \mcode{E\_PROFILE\_LIMIT} \\
Maximum trace length & \mcode{L-064} (64); \Allow & \mcode{L-065} (65); \Indeterminate; \mcode{E\_PROFILE\_LIMIT} \\
Continuation count & \mcode{T-256} (256); \Allow & \mcode{T-257} (257); \Indeterminate; \mcode{E\_PROFILE\_LIMIT} \\
Resource count & \mcode{R-256} (256); \Allow & \mcode{R-257} (257); \Indeterminate; \mcode{E\_PROFILE\_LIMIT} \\
Obligation count & \mcode{O-256} (256); \Allow & \mcode{O-257} (257); \Indeterminate; \mcode{E\_PROFILE\_LIMIT} \\
Frame-predicate count & \mcode{F-256} (256); \Allow & \mcode{F-257} (257); \Indeterminate; \mcode{E\_PROFILE\_LIMIT} \\
Closure-predicate count & \mcode{C-256} (256); \Allow & \mcode{C-257} (257); \Indeterminate; \mcode{E\_PROFILE\_LIMIT} \\
Policy-predicate study range & \mcode{P-256} (256); \Allow; study ceiling & ---; no profile-cap inference \\
Join-source count & \mcode{J-008} (8); \Allow & \mcode{J-009} (9); \Indeterminate; \mcode{E\_PROFILE\_LIMIT} \\
Normalized join output & \mcode{A-128} (256); \Allow & \mcode{A-129} (258); \Indeterminate; \mcode{E\_PROFILE\_LIMIT} \\
Raw precedence edges & \mcode{IPE-EXACT} (28); \Allow & \mcode{IPE-OVER} (29); \Indeterminate; \mcode{E\_PROFILE\_LIMIT} \\
Trace-choice enumeration & \mcode{ITC-EXACT} (4096); \Deny; \mcode{E\_JOIN\_COMPOSITION} & \mcode{ITC-OVER} (5120); \Indeterminate; \mcode{E\_PROFILE\_LIMIT} \\
Topological-order enumeration & \mcode{ITO-BELOW} (720); \Allow & \mcode{ITO-OVER} (5040); \Indeterminate; \mcode{E\_PROFILE\_LIMIT} \\
    \bottomrule
  \end{tabularx}
  \parbox{\textwidth}{\vspace{2pt}\footnotesize
  \textit{Interpretation.} \mcode{P-256} is only the registered study ceiling, not a policy-predicate cap. \mcode{ITC-EXACT} enumerates 4,096 trace choices/raw compositions and then returns \Deny/\mcode{E\_JOIN\_COMPOSITION} because its supplied one-continuation proposal is not the complete 4,096-continuation composition; it therefore does not establish 4,096 as accepted join-output capacity. The topological-order probes observe 720 below the guard and 5,040 over it; there is no exact 4,096-order fixture. \mcode{IPE-OVER} supplies 29 raw edges, so the raw-length guard fires before validation of the fixture's latent cycle.}
\end{table*}

\begin{figure*}[t]
  \centering
  \begin{minipage}[t]{0.49\textwidth}
    \centering
\begin{tikzpicture}[x=1cm,y=0.57cm]
\node[anchor=south,font=\small\bfseries] at (4.275,6.45) {Non-join dimensions};
\node[anchor=east,font=\scriptsize] at (2.63,6) {Goals};
\fill[gtBlue!72] (2.75,5.885) rectangle (2.7559,6.115);
\fill[gtBlue] (2.7559,6) circle (1.15pt);
\node[anchor=west,font=\scriptsize] at (5.92,6) {1.01$\times$};
\node[anchor=west,font=\tiny,text=black!62] at (2.75,5.75) {\mcode{G-064}/\mcode{G-004}};
\node[anchor=east,font=\scriptsize] at (2.63,5) {Depth};
\fill[gtBlue!72] (2.75,4.885) rectangle (2.7782,5.115);
\fill[gtBlue] (2.7782,5) circle (1.15pt);
\node[anchor=west,font=\scriptsize] at (5.92,5) {1.05$\times$};
\node[anchor=west,font=\tiny,text=black!62] at (2.75,4.75) {\mcode{D-063}/\mcode{D-004}};
\node[anchor=east,font=\scriptsize] at (2.63,4) {Trace length};
\fill[gtBlue!72] (2.75,3.885) rectangle (3.9057,4.115);
\fill[gtBlue] (3.9057,4) circle (1.15pt);
\node[anchor=west,font=\scriptsize] at (5.92,4) {7.45$\times$};
\node[anchor=west,font=\tiny,text=black!62] at (2.75,3.75) {\mcode{L-064}/\mcode{L-004}};
\node[anchor=east,font=\scriptsize] at (2.63,3) {Continuations};
\fill[gtBlue!72] (2.75,2.885) rectangle (5.5938,3.115);
\fill[gtBlue] (5.5938,3) circle (1.15pt);
\node[anchor=west,font=\scriptsize] at (5.92,3) {139.80$\times$};
\node[anchor=west,font=\tiny,text=black!62] at (2.75,2.75) {\mcode{T-256}/\mcode{T-004}};
\node[anchor=east,font=\scriptsize] at (2.63,2) {Resources};
\fill[gtBlue!72] (2.75,1.885) rectangle (2.9947,2.115);
\fill[gtBlue] (2.9947,2) circle (1.15pt);
\node[anchor=west,font=\scriptsize] at (5.92,2) {1.53$\times$};
\node[anchor=west,font=\tiny,text=black!62] at (2.75,1.75) {\mcode{R-256}/\mcode{R-004}};
\node[anchor=east,font=\scriptsize] at (2.63,1) {Obligations};
\fill[gtBlue!72] (2.75,0.885) rectangle (3.6955,1.115);
\fill[gtBlue] (3.6955,1) circle (1.15pt);
\node[anchor=west,font=\scriptsize] at (5.92,1) {5.17$\times$};
\node[anchor=west,font=\tiny,text=black!62] at (2.75,0.75) {\mcode{O-256}/\mcode{O-004}};
\draw[->,black!55] (2.75,0.48) -- (5.88,0.48);
\draw[black!55] (2.7500,0.48) -- (2.7500,0.38) node[below,font=\tiny] {1$\times$};
\draw[black!55] (4.0755,0.48) -- (4.0755,0.38) node[below,font=\tiny] {10$\times$};
\draw[black!55] (5.4010,0.48) -- (5.4010,0.38) node[below,font=\tiny] {100$\times$};
\end{tikzpicture}
  \end{minipage}\hfill
  \begin{minipage}[t]{0.49\textwidth}
    \centering
\begin{tikzpicture}[x=1cm,y=0.57cm]
\node[anchor=south,font=\small\bfseries] at (4.275,6.45) {Remaining and join dimensions};
\node[anchor=east,font=\scriptsize] at (2.63,6) {Frames};
\fill[gtPurple!72] (2.75,5.885) rectangle (5.6209,6.115);
\fill[gtPurple] (5.6209,6) circle (1.15pt);
\node[anchor=west,font=\scriptsize] at (5.92,6) {146.51$\times$};
\node[anchor=west,font=\tiny,text=black!62] at (2.75,5.75) {\mcode{F-256}/\mcode{F-004}};
\node[anchor=east,font=\scriptsize] at (2.63,5) {Closure predicates};
\fill[gtPurple!72] (2.75,4.885) rectangle (3.3809,5.115);
\fill[gtPurple] (3.3809,5) circle (1.15pt);
\node[anchor=west,font=\scriptsize] at (5.92,5) {2.99$\times$};
\node[anchor=west,font=\tiny,text=black!62] at (2.75,4.75) {\mcode{C-256}/\mcode{C-004}};
\node[anchor=east,font=\scriptsize] at (2.63,4) {Policy predicates*};
\fill[gtPurple!72] (2.75,3.885) rectangle (3.2643,4.115);
\fill[gtPurple] (3.2643,4) circle (1.15pt);
\node[anchor=west,font=\scriptsize] at (5.92,4) {2.44$\times$};
\node[anchor=west,font=\tiny,text=black!62] at (2.75,3.75) {\mcode{P-256}/\mcode{P-004}};
\node[anchor=east,font=\scriptsize] at (2.63,3) {Join sources};
\fill[gtPurple!72] (2.75,2.885) rectangle (2.9313,3.115);
\fill[gtPurple] (2.9313,3) circle (1.15pt);
\node[anchor=west,font=\scriptsize] at (5.92,3) {1.37$\times$};
\node[anchor=west,font=\tiny,text=black!62] at (2.75,2.75) {\mcode{J-008}/\mcode{J-004}};
\node[anchor=east,font=\scriptsize] at (2.63,2) {Join orderings};
\fill[gtPurple!72] (2.75,1.885) rectangle (4.0329,2.115);
\fill[gtPurple] (4.0329,2) circle (1.15pt);
\node[anchor=west,font=\scriptsize] at (5.92,2) {9.29$\times$};
\node[anchor=west,font=\tiny,text=black!62] at (2.75,1.75) {\mcode{Q-Q0}/\mcode{Q-Q5}};
\node[anchor=east,font=\scriptsize] at (2.63,1) {Join output};
\fill[gtPurple!72] (2.75,0.885) rectangle (5.1790,1.115);
\fill[gtPurple] (5.1790,1) circle (1.15pt);
\node[anchor=west,font=\scriptsize] at (5.92,1) {68.01$\times$};
\node[anchor=west,font=\tiny,text=black!62] at (2.75,0.75) {\mcode{A-128}/\mcode{A-004}};
\draw[->,black!55] (2.75,0.48) -- (5.88,0.48);
\draw[black!55] (2.7500,0.48) -- (2.7500,0.38) node[below,font=\tiny] {1$\times$};
\draw[black!55] (4.0755,0.48) -- (4.0755,0.38) node[below,font=\tiny] {10$\times$};
\draw[black!55] (5.4010,0.48) -- (5.4010,0.38) node[below,font=\tiny] {100$\times$};
\end{tikzpicture}
  \end{minipage}
  \caption{Within-family median latency ratios for selected \Allow{} cells, on a logarithmic horizontal scale. Each bar compares the displayed endpoint cell with its registered same-outcome reference cell (pair printed beneath the bar); fixture construction and setup are outside the timed authorization call. Ratios are local sensitivity measurements, not asymptotic-complexity estimates or cross-family comparisons. The policy-predicate endpoint marked with an asterisk is a study ceiling rather than a profile boundary.}
  \Description{Two horizontal logarithmic bar charts report within-family median authorization-latency ratios. The largest ratios are 146.51 times for frame predicates, 139.80 times for continuation count, and 68.01 times for join output; goal population, depth, resources, and join-source count remain near their registered references.}
  \label{fig:scale-latency-ratios}
\end{figure*}

Every declared direct representation boundary in the fixed registry is
accepted, while each corresponding one-over-limit cell returns
\Indeterminate{}/\mcode{E\_PROFILE\_LIMIT}.  This includes 64 retained goals,
a depth-63 chain that fills the 64-goal population, length-64 traces, 256
continuations and entries in each governed goal-constraint array, eight join
sources, 28 supplied precedence-edge entries, and 256 normalized joined
continuations.  The P family ends at its predeclared observation ceiling of 256
current-policy predicates and therefore establishes no separate policy-list
limit.

The internal probes distinguish enumeration work from successful output.
\mcode{ITC-EXACT} reaches 4,096 source-trace choices and 4,096 raw compositions,
then returns \Deny{}/\mcode{E\_JOIN\_COMPOSITION}: the supplied one-continuation
proposal does not equal the complete composition.  It does not authorize a
4,096-continuation joined goal.  The largest successful A cell contains 256
normalized continuations; the next fixed cell supplies 258 and returns
\Indeterminate{} without truncation.  The topological-order construction
observes 720 orders below the guard and fails closed for a fixed 5,040-order
input; no exact-4,096 topological-order poset is claimed.  The 29-edge cell is
rejected by the direct raw-length guard before its latent cycle is evaluated.

Within the \Allow{} class, endpoint-to-reference median ratios are 1.01 for
goal population and 1.05 for chain depth, compared with 7.45 for trace length,
139.80 for continuation count, and 146.51 for active-frame predicates.  The
corresponding ratios are 1.53 for resources, 5.17 for obligations, 2.99 for
closing predicates, and 2.44 at the policy-predicate study ceiling.  For joins,
eight chain-ordered sources are 1.37 times their four-source reference, the
256-output A cell is 68.01 times its reference, and the six-source unordered
Q cell with 720 topological orders is 9.29 times the fully ordered one-order Q
cell.  These are descriptive ratios within each fixed family; adjacent points
need not be monotone, and the design does not support cross-family causal or
asymptotic conclusions.  The median relative block-median absolute deviation
over all cells is \GoalTransitionScaleRelativeBlockMedianMadMedianPercent\%, and
\GoalTransitionScaleRelativeBlockMedianMadAtMostFivePercentCellCount{} of 129
cells are at or below 5\%.  Among \Allow{} cells, the largest p99 is
\GoalTransitionScaleAllowMaximumPercentileNinetyNineMicroseconds{} microseconds
at \GoalTransitionScaleAllowMaximumPercentileNinetyNineCell{} on this host.

Serialized evidence reflects a different axis from decision latency.  At 64
goals, retained state is \GoalTransitionScaleGoalPopulationCapRetainedStateBytes{}
canonical JSON bytes; at 256 joined continuations it is
\GoalTransitionScaleJoinOutputCapRetainedStateBytes{} bytes.  The transition
receipt binds a goal digest instead of embedding the complete goal, whereas
the complete receipt chain grows with graph history: it is
\GoalTransitionScaleGoalPopulationCapCompleteReceiptChainBytes{} bytes at 64
goals and \GoalTransitionScaleMaximumDepthCapCompleteReceiptChainBytes{} bytes
at chain depth 63.  The 28-edge join's target receipt is
\GoalTransitionScalePrecedenceEdgeCapTargetReceiptBytes{} bytes.  Memory records are
diagnostic snapshots: the median immediate heap difference is
\GoalTransitionScaleHeapImmediateMedianBytes{} bytes, the median post-GC heap
difference is \GoalTransitionScaleHeapAfterGarbageCollectionMedianBytes{}
bytes, and the median process-lifetime \mcode{maxRSS} increase is
\GoalTransitionScaleMaximumResidentSetDeltaMedianKilobytes{} KiB.  These values
are neither peak allocation nor portable per-call memory cost.

\section{Discussion}
\label{sec:discussion}

\subsection{Authorization follows planning state}

The central systems implication is that goal-graph mutation is security
relevant when it changes the effects that may become reachable.  A planner may
freely explore internal alternatives, but activating a child, replacement,
delegation, or join is comparable to changing an authorization-relevant
configuration.  Exposing that boundary permits conventional security
techniques---reference monitoring, version checks, append-only receipts, and
fail-closed decisions---to govern long-horizon behavior without attempting to
police private reasoning text.

This separation also clarifies the role of model alignment.  An aligned planner
can reduce the frequency of invalid proposals and improve recovery from a
denial.  The external mechanism supplies a different property: for the
registered effect and state domain, protected operations cannot rely solely on
the planner's own assertion that a new subgoal is faithful.

\subsection{Relationship to plan verification}

Whole-plan verification and transition authorization are complementary.  A
complete finite plan can be checked before execution.  Long-horizon agents,
however, often change the plan after observations, failure, or delegation.
Rechecking the complete model after every change is a strong baseline and may
be the right implementation for small graphs.  Incremental transition checks
become useful when the retained witness identifies which residual language,
invariants, and obligations are affected.  The security property is the same:
no graph update or effect commits without a current proof.  The paper's
contribution is the authorization protocol and invariant, not a claim that
incrementality is universally faster.

\subsection{Deployment patterns}

Coding agents provide one direct application.  A debugging task may authorize
repository reads, a test edit, and execution in a sandbox, while deployment,
credential changes, and production traffic remain outside the root contract.
A generated remediation subgoal can change method without acquiring those
effects.  Administrative agents can similarly rebook or refund within one
customer, amount, and time window while preventing a recovery branch from
widening the account scope.  Data-analysis agents can revise statistical
methods while retaining dataset, purpose, retention, and disclosure
constraints.  Manager--worker systems can delegate within one root task without
allowing a worker's summary to become a new source of authority.

In each setting, the practical requirement is a typed operation and state
boundary.  Existing API gateways, transaction services, CI/CD controllers, or
tool brokers can host that boundary.  The agent runtime needs only to expose
goal mutations and carry witness identifiers with protected calls.

\subsection{Abstraction and operator choices}

Exact finite reachability is not always practical.  A deployment can use
sound abstraction, bounded resource domains, conservative effect summaries,
or domain-specific solvers.  Coarser abstractions tend to increase
\Indeterminate{} decisions rather than unsafe allows, provided the
over-approximation and universal allow rule are preserved.  The retained
counterexample path helps operators refine only the state components that made
a safe proposal ambiguous.

Operators also choose what changes require principal reauthorization.  A
policy version advance, a resource-owner change, or a newly observed state may
invalidate a witness immediately.  Purely observational changes can remain
outside the relevant version set.  That dependency is declared in the root
contract and tested by freshness mutants; it is not inferred from an agent's
description after the fact.

\subsection{Security composition}

Goal-transition authorization does not replace caller authentication,
credential isolation, provenance tracking, input sanitization, rate limits, or
settlement protocols.  It consumes authenticated identities and trusted state
versions, and it assumes the commit adapter is the only path to protected
effects.  In return, it supplies a composition property those controls do not
individually provide: the current goal graph and its still-reachable effects
remain inside one approved structured root task.

\section{Validity Conditions and Scope}
\label{sec:validity}

\paragraph{Construct validity.}
The primary outcomes are defined over registered state predicates, normalized
effects, and trace membership.  They measure authorization within that model,
not every harmful real-world consequence.  Matched benign and drifted cases
hold the prefix and affordances constant to isolate goal evolution, while the
independent checker prevents the runtime implementation from serving as its own
oracle.

\paragraph{Internal validity.}
Expected outcomes, workflow pairs, baseline inputs, mutation operators, and
analysis rules are frozen before the final controlled execution.  Each
run records its initial and final state, complete receipt chain, adapter and
contract digests, and reset result.  Disagreement between validators blocks the
result rather than being resolved by a subjective tie-breaker.  Each comparison
receives the precise input projection stated in
Section~\ref{sec:evaluation-design}.  The strongest trace comparison receives
the root trace policy, effect catalog and transition semantics, forbidden and
required-terminal predicates, current state, complete committed prefix,
complete proposed continuation, and proposed goal success predicate, and it
blocks on unknown outcomes; it does not receive the typed lineage, obligation,
resource, version, or explicit join-evidence checks that define the distinct
question under study.

\paragraph{External validity.}
The study covers four operational domains, one separate access-grant join
fixture, and two public tool-execution paths, with
at least half of the workflow templates rooted in public sources.  These are
deliberately heterogeneous but do not represent every agent framework,
enterprise policy, or physical environment.  Claims are stated for the
registered task families and finite contract profile.  The adapter evidence
tests the upstream dispatch and call paths; it is not evidence of deployment inside the
upstream projects or endorsement by their authors.
The study evaluates enforcement after a typed mutation is presented; proposal
frequency and planning quality are separate outcomes.

\paragraph{Scale-study validity.}
The secondary study is a single-host, one-factor-at-a-time characterization of
synthetic deterministic fixtures.  It does not estimate interactions among
dimensions, production throughput, concurrent or distributed behavior,
end-to-end agent latency, or performance on unseen contracts.  Setup and
fixture construction are excluded from decision latency.  The five block
medians are repeated process blocks on one machine, not independent machines
or population replications, and no confidence interval or significance claim
is made.  A successful boundary cell establishes behavior at that registered
input, not a recommended sustained operating capacity; fail-closed behavior is
established only for the fixed cells.

Each cell uses one isolated memory subprocess.  The reported
\mcode{process.memoryUsage()} values are pre-decision, immediate post-decision,
and post-garbage-collection snapshots; their differences may be negative and
are not peak-allocation estimates.  \mcode{process.resourceUsage().maxRSS} is a
process-lifetime high-water indicator that includes runtime and fixture setup,
not memory attributable solely to the authorization call.  Neither measure is
portable across hosts or runtimes.

\paragraph{Formal validity.}
The safety theorem depends on sound transition semantics, a sound
over-approximation of future traces, fresh relevant versions, complete
mediation, and a faithful commit adapter.  The implementation-to-model table
and targeted mutants exercise these premises separately.  Bounded exploration
supports the finite instances and finds counterexamples to weakened variants;
it does not replace the mathematical proof.  The finite decision property
holds only where reachability is exact and all typed constraints are decidable
in the registered domain.

\paragraph{Structured-policy boundary.}
The mechanism starts from a principal-approved structured root task.  It does
not decide whether arbitrary natural-language goals entail one another and
does not infer permission from free-form agent explanations.  A deployment may
use a model to propose the initial contract, but authority begins only after a
trusted approval step.  This is an interface boundary rather than an empirical
assumption hidden in the evaluation.

\paragraph{Single-root profile.}
Delegation and joins remain under one principal and one authorization root.
Combining independent roots introduces principal conflict, authority algebra,
and cross-root accountability that require a separate model.  The checker
rejects such composition instead of presenting an unproved rule.

\paragraph{Availability and progress.}
Fail-closed behavior permits denial of service through repeated invalid or
ambiguous proposals.  Conditional success preservation establishes what is
true when the root closes; it does not promise that a planner will find a valid
plan or terminate.  The evaluation measures benign completion and recovery so
that this availability cost remains visible.

\paragraph{Unmediated and irreversible effects.}
An effect outside the registered adapter is outside the protected execution.
Likewise, invalidating a witness prevents future commits but cannot reverse an
already completed physical action.  Deployment therefore requires coverage
analysis for effect paths and domain-specific compensation where reversal is
needed.

\section{Conclusion}
\label{sec:conclusion}

Long-horizon tool use makes goal evolution part of the authorization state.
Checking each call against a static permission set cannot enforce policies that
depend on ordered composition, outstanding obligations, explicitly joined
sibling interactions, or changed state.  This paper makes goal creation,
replacement, same-root
delegation, and same-root joining explicit authorization events.  A proposed
transition is activated only when its structured continuation refines the
current root task contract, and every protected effect is mediated against the
resulting residual permitted trace language.

The formal result establishes root-policy safety under explicit, testable
assumptions and preserves required terminal predicates when the authorized
goal graph closes.  The artifact connects these results to bounded exploration,
an independent checker, targeted mutants, six explicitly specified comparison
mechanisms, paired external-source workflows, and two public tool-execution
call paths.  A separately frozen 129-cell sensitivity study additionally
confirms the declared direct and internal finite-profile boundaries and makes
their host-local cost curves explicit without converting them into a production
capacity claim.  Together, these elements provide a planner-independent method for
allowing useful replanning and delegation without letting self-generated
subgoals become a source of new authority within the registered structured
domain and mediation assumptions.

\bibliographystyle{ACM-Reference-Format}
\bibliography{references}

\appendix
\section{Proofs and Executable Formal Audit}
\label{app:formal-proofs}

\subsection{Auxiliary lemmas}

\begin{lemma}[Residual preservation]
\label{lem:residual-preservation}
Let $K$ be prefix closed and suppose
\[
 \mathsf{SafeFuture}_C(s,t,K).
\]
Suppose also
\[
 e^{-1}K\ne\varnothing
 \quad\text{and}\quad
 s'\in\mathsf{Post}(s,e).
\]
Then $te\in L_C$, $s'\notin B_C$, and
$\mathsf{SafeFuture}_C(s',te,e^{-1}K)$ hold.
\end{lemma}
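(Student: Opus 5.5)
The plan is to unfold Eq.~\eqref{eq:safe-future} and instantiate its universal quantifiers at well-chosen traces. First, because $e^{-1}K\ne\varnothing$, pick some $u_0$ with $eu_0\in K$. Since $K$ is prefix closed, the one-symbol trace $e$ is itself in $K$; it is convenient to work with $u=e$ or $u=eu_0$, both members of $K$. Taking the prefix $v=e\preceq u$ in the hypothesis $\mathsf{SafeFuture}_C(s,t,K)$ gives $te\in L_C$ and $\mathsf{Post}(s,e)\cap B_C=\varnothing$. Because $s'\in\mathsf{Post}(s,e)$, this yields $s'\notin B_C$, which settles the first two conclusions.

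For the third conclusion, I first check that $e^{-1}K$ is prefix closed. If $u'\in e^{-1}K$ and $v'\preceq u'$, then $ev'\preceq eu'\in K$, so $ev'\in K$ and hence $v'\in e^{-1}K$. I then verify the two conjuncts of $\mathsf{SafeFuture}_C(s',te,e^{-1}K)$. The membership $te\in L_C$ was established above. For the universal part, fix $u'\in e^{-1}K$ and $v'\preceq u'$. Then $eu'\in K$ and $ev'\preceq eu'$, so the hypothesis gives $t(ev')=(te)v'\in L_C$ and $\mathsf{Post}(s,ev')\cap B_C=\varnothing$.

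The step I expect to need the most care is transferring the forbidden-state condition from $s$ to $s'$. The recursive definition gives $\mathsf{Post}(s,ev')=\bigcup_{x\in\mathsf{Post}(s,e)}\mathsf{Post}(x,v')$. Since $s'\in\mathsf{Post}(s,e)$, we have $\mathsf{Post}(s',v')\subseteq\mathsf{Post}(s,ev')$, and therefore $\mathsf{Post}(s',v')\cap B_C=\varnothing$. This inclusion is the only place where nondeterminism matters, and the union form of $\mathsf{Post}$ makes it immediate because the chosen successor $s'$ is one of the branches. Associativity of concatenation, $t(ev')=(te)v'$, handles the language side. Together these establish $\mathsf{SafeFuture}_C(s',te,e^{-1}K)$, and this in turn is the inductive step used for the effect-commit case of Theorem~\ref{thm:root-safety}.
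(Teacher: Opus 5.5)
Your proposal is correct and follows the paper's proof essentially step for step: you use prefix closure to place $e$ in $K$, instantiate at the prefix $e$ to get $te\in L_C$ and $s'\notin B_C$, and derive $\mathsf{Post}(s',v')\subseteq\mathsf{Post}(s,ev')$ from the recursive union definition. Your explicit check that $e^{-1}K$ is prefix closed is a small addition the paper leaves implicit; note also that instantiating at the prefix $e$ already works with $u=eu_0$, without appealing to prefix closure.
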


\begin{proof}
Because $e^{-1}K\ne\varnothing$, there is an $eu\in K$.  Prefix closure gives
$e\in K$.  Instantiating Eq.~\eqref{eq:safe-future} with prefix $e$ yields
$te\in L_C$ and
$\mathsf{Post}(s,e)\cap B_C=\varnothing$, hence $s'\notin B_C$.
Now choose arbitrary $u\in e^{-1}K$ and $v\preceq u$.  Since $e u\in K$ and
$e v\preceq e u$, Eq.~\eqref{eq:safe-future} gives
$t e v\in L_C$ and
$\mathsf{Post}(s,e v)\cap B_C=\varnothing$.  By the recursive definition of
$\mathsf{Post}$,
$\mathsf{Post}(s',v)\subseteq\mathsf{Post}(s,e v)$, so its intersection with
$B_C$ is also empty.  These are precisely the conjuncts of
$\mathsf{SafeFuture}_C(s',te,e^{-1}K)$.
\end{proof}

\begin{lemma}[Authorized graph edits preserve the active future invariant]
\label{lem:edit-preservation}
Suppose a runnable configuration receives an edit below an active parent.  If
the checker accepts W1--W7 for $\Delta$ against the current state, committed
prefix, and versions, the installed child--residual composition $K_\Delta$ and
the resulting selected-path intersection satisfy $\mathsf{SafeFuture}_C$,
remain under the same root, and do not expand resources or add or drop
unresolved authorization obligations.
\end{lemma}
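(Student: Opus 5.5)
The plan is to unfold the checker's acceptance into the individual obligations W1--W7. Each conjunct of the lemma's conclusion should follow from one or two of them, with soundness of the checker (A5) letting us treat acceptance as semantic truth of those obligations. A5 gives, for the current state $s$ and committed prefix $t$ of the runnable configuration, that W1--W7 hold as stated in the structured theory. Since the configuration is runnable and the edit is accepted, W7 makes the witness fresh. By A6, no version advance can be interleaved between checking and installation, so the state, prefix, and residual languages used by the checker are exactly those of the installed configuration.

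\textbf{Step 1: $\mathsf{SafeFuture}_C(s,t,K_\Delta)$.} First I would establish $t\in L_C$. I would take this from the invariant that the current committed prefix is permitted; this is the part of Theorem~\ref{thm:root-safety} proved by induction, so here I state it as a hypothesis supplied by that induction. Alternatively, it follows from W3, since $K_\Delta\subseteq t^{-1}L_C$ and $\epsilon\in K_\Delta$ by prefix closure give $t\in L_C$ whenever $K_\Delta\neq\varnothing$. Next, take $u\in K_\Delta$ and $v\preceq u$. Prefix closure puts $v\in K_\Delta$, and W3 then gives $v\in t^{-1}L_C$, i.e.\ $tv\in L_C$. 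W4 (Eq.~\eqref{eq:frame-preservation}) gives $\mathsf{Post}(s,v)\cap B_C=\varnothing$. These are exactly the conjuncts of Eq.~\eqref{eq:safe-future}.

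\textbf{Step 2: the selected-path intersection.} The new selected path $\mathsf{Path}(w)$ consists of the ancestors of $p$, whose residual languages are unchanged, together with the installed nodes, whose languages compose to $K_\Delta$. So $K_{X,w}$ is an intersection that contains $K_\Delta$ among its factors, and hence $K_{X,w}\subseteq K_\Delta$. I would then note a monotonicity fact: $\mathsf{SafeFuture}_C(s,t,\cdot)$ is antitone in its third argument, because it is a universal statement over members and their prefixes. An intersection of prefix-closed sets is again prefix closed, so the path language is a valid argument. Both facts are immediate from the definitions. The residualization condition in Eq.~\eqref{eq:witness-path-language}, that all factors are residualized at the same committed $t$, is guaranteed because W3 binds the completion component to the current parent residual.

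\textbf{Step 3: root, resources, obligations.} W2 gives that every new node and delegation stays below $\rho$, with an acyclic graph and an active parent; A8 excludes any other root. W5 bounds the aggregate of simultaneously exercisable resources by both $R_p$ and $R_C$, using the soundness of the registered aggregation preorder. W5 also states that the unresolved set equals $O_p$ minus catalog-discharged obligations, so none is added and none is dropped without trusted discharge evidence.

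The main obstacle I expect is Step 2 in the case of replacement and explicit joins. There the "selected path" after installation is not literally the old path extended by a child. I would handle this by observing that W3 quantifies over the complete child--residual composition $\mathsf{Comp}_\prec$ for the edited path, including all interleavings for a join. It also requires $K_\Delta\subseteq K_p$, so whatever path is selected has its intersection bounded by $K_\Delta$. I would state this explicitly as the case analysis create/replace/delegate/join, each reducing to the inclusion $K_{X,w}\subseteq K_\Delta$.
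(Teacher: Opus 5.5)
Your proposal is correct and follows the paper's proof essentially step for step. W3 gives $tv\in L_C$ and W4 gives forbidden-state avoidance; universal safety restricts to the selected-path intersection because $K_\Delta$ is one of its operands; W2 gives same-root lineage; and W5 gives resource attenuation and obligation inheritance or discharge. You are slightly more careful than the paper in appealing to A5 and in treating the $t\in L_C$ conjunct separately, but in your final paragraph the bound $K_{X,w}\subseteq K_\Delta$ comes from $K_\Delta$ being an intersection operand, not from $K_\Delta\subseteq K_p$.
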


\begin{proof}
W1, W2, and W7 establish current identity, same-root lineage, acyclicity, and
version binding.  W3 gives $K_\Delta\subseteq K_p\cap t^{-1}L_C$, binds the
complete child--residual composition, and soundly covers the selected edited
path.  Thus every $u\in K_\Delta$ and prefix $v\preceq u$ satisfies
$tv\in L_C$.  W4 independently quantifies the current-state successors of
every such prefix and supplies the forbidden-state and active-path frame
conjuncts.  The new selected-path language has $K_\Delta$ as one intersection
operand and is therefore a subset of $K_\Delta$; universal safety is preserved
by that restriction even if the edited parent previously belonged to a
different, now-stale selected path.  W5 supplies resource
attenuation and exact inheritance or trusted discharge of parent obligations.
An explicit join includes all registered sibling interleavings in
$K_\Delta$; ordinary stale sibling witnesses authorize no effect.  Together
these facts establish the claimed invariant and side conditions.
\end{proof}

\begin{lemma}[Invalidation excludes stale commits]
\label{lem:invalidation}
Under A1 and A6, after any policy, relevant-state, or goal-graph version
advance, no protected effect can commit using a witness bound to an earlier
version.
\end{lemma}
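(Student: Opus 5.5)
We argue by contradiction and fix the atomic order supplied by A6. Suppose a protected effect $e$ commits using a witness $w$ bound to a version tuple whose component $\nu_P$, $\nu_S$, or $\kappa$ is older than the trusted current value. By A1, the commit occurred through the effect rule of Eq.~\eqref{eq:effect-residual}, since no other path can commit a protected effect. That rule requires a \emph{current} witness on the selected path. It therefore suffices to show that, at the linearization point of the commit, every witness bound to an earlier version is no longer current. We split on which version advanced before the commit in the A6 order.

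\textbf{Case 1: policy or relevant-state advance.} The Invalidate rule advances $\nu_P$ or $\nu_S$ before the next protected commit. In the same atomic step it invalidates every dependent witness and sets $q=\mathsf{suspended}$. A6 places this step strictly before the commit, so at the commit point $w$ is invalid. Moreover, a suspended execution cannot reach the commit rule. The only return to $\mathsf{runnable}$ is through Reauthorize or a fresh root edit, and both install a \emph{new} witness checked against current versions under W7; neither reuses $w$.

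\textbf{Case 2: goal-graph advance $\kappa$.} $\kappa$ is advanced only by an accepted edit or by closing a non-root goal. Both operations atomically stale all prior witnesses; an accepted edit then installs only the new edit's witness, which is bound to the new $\kappa$. The witness binding W1 and freshness check W7 record the base revision. Hence $w$, bound to an older $\kappa$, is stale at the commit point. The epoch $\eta$ cannot advance within an execution, since a new epoch begins a new execution, so it contributes no case.

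The main obstacle is excluding interleavings in which a reservation or freshness check reads the old version but the commit lands after the advance. This is the check/use race. The plan is to invoke A6 to place authorization, version advance, and commit in a single total order. We then note that the commit rule re-checks witness currency at its own linearization point, rather than relying on the earlier \mcode{authorizeEffect} result. In addition, outstanding reservations are invalidated together with witnesses, as stated for \mcode{changeVersion}. With a total order, every advance that precedes the commit has already staled $w$, and any advance that follows it is irrelevant to that commit. This completes the argument.
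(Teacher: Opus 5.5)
Your proof is correct and takes essentially the same route as the paper's. A6 orders the advance and invalidation before the next commit; a policy or relevant-state advance suspends the execution; an accepted graph edit atomically stales prior witnesses and installs only its own; and the commit rule's requirement of a witness whose W7 versions equal the current vector, together with A1, rules out any stale commit. One slip is harmless: per Section~\ref{sec:formalization}, $\kappa$ also advances on cancellation and, atomically with the selected witness, on each mediated effect commit, so your claim that only accepted edits and non-root closure advance it is incomplete, but your W7-based conclusion that a witness bound to an older $\kappa$ is not current does not depend on that enumeration.
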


\begin{proof}
A6 orders the version advance and dependent-witness invalidation before the
next commit.  A policy or relevant-state advance sets the configuration to
suspended.  An accepted graph edit atomically advances $\kappa$, invalidates
the prior witnesses, and installs only the new selected witness while remaining
runnable.  In either case Eq.~\eqref{eq:effect-residual} requires a witness
whose W7 versions equal the trusted current vector, which an old witness does
not satisfy.  A1 excludes a second protected-effect path.
\end{proof}

\subsection{Proof of Theorem~\ref{thm:root-safety}}

\begin{proof}
We induct on the finite transition prefix of an arbitrary execution.

\emph{Base case.}  A7 gives
$\mathsf{SafeFuture}_C(s_0,\epsilon,K_{g_0})$ and therefore
$\epsilon\in L_C$.  The root identity, resource envelope, and obligations are
those of $C$.

\emph{Authorized create, replace, delegation, or same-root composition.}
The operation commits only after W1--W7 hold.  Lemma~\ref{lem:edit-preservation}
establishes the future invariant for the newly selected current-witness path.
The committed trace and external state are unchanged.  The graph-revision
advance makes prior sibling witnesses stale; an explicit join instead checks
all interleavings in Eq.~\eqref{eq:composition}.  W2 excludes a cross-root
edit, while W5 preserves the resource and obligation side conditions.

\emph{Protected-effect commit.}  A1 and A2 ensure that the only executed
protected effect is the normalized $e$ checked by the rule.  W7 and
Lemma~\ref{lem:invalidation} establish freshness.  Because
$e^{-1}K_{X,w}\ne\varnothing$, Lemma~\ref{lem:residual-preservation} gives
$te\in L_C$, excludes every modeled successor in $B_C$, and establishes the
future invariant for $e^{-1}K_{X,w}$.  Every node on the selected path is
residualized at the same commit point.  The commit and residual update are
atomic by A6.  A3 and A4 connect the actual modeled successor and continuation
to the relations quantified by the witness.

\emph{Invalidation.}  Advancing a version commits no effect and does not change
$t$.  The configuration becomes suspended, so the theorem makes no claim that
an exogenously changed world state is safe; it claims that no stale protected
effect follows.  Lemma~\ref{lem:invalidation} supplies that fact.

\emph{Reauthorization.}  A successful reauthorization is the graph-edit case
evaluated for a complete selected path at current versions and returns the
configuration to runnable.  A failed or indeterminate check leaves the
configuration suspended and commits no effect.  When no active non-root goal
exists, the same argument applies to the special current-version graph edit
that installs the first selected path and atomically resumes the root.

\emph{Cancellation or closure.}  Cancellation removes future behavior, creates
no fresh witness, and commits no protected effect.  Goal closure requires a
current witness, a completed registered suffix, and resolved required
descendants.  Either non-root operation advances the graph revision and leaves
no current selected witness; a later protected commit is therefore blocked by
W7 until an active path is reauthorized.  Root closure checks the permitted
completed trace, root predicate, and obligations and makes the configuration
closed.  These operations leave $t\in L_C$; the functional implication of
validated closure is addressed separately in Theorem~\ref{thm:success}.

These cases exhaust the operational rules.  Hence every committed trace stays
in $L_C$, and every protected effect from a runnable configuration is current,
same-root, and root-safe under the modeled transition relation.
\end{proof}

\subsection{Proof of Theorem~\ref{thm:success}}

\begin{proof}
Order closed non-root goal nodes by reverse topological order in the finite
acyclic goal graph.  A leaf can close only when its executed suffix belongs to
its current $D_g$ and its structured predicate $Q_g$ and required discharge
evidence have been validated under a fresh witness.  This is the induction
base.

For the induction step, let parent $p$ have a witnessed refinement whose
required children and residual component have closed.  The induction
hypothesis supplies the validated closing predicates and discharged obligations
for each child.  W6 quantified over the common closing state of every trace in
$D_\Delta$; it therefore entails $Q_p$ in that state.  W4 ensures that the
frames registered for the composition held throughout its prefixes, rather
than only at child-local closing times.  W5 ensures that no unresolved parent
obligation disappeared between levels.  Thus validated child and residual
closure lifts to validated parent closure.

For a branch that closes through descendants, repeated application of this
step yields $Q_{g_0}$ in the common root-closing state.  A cancelled branch
does not enter that induction: it contributes no closure evidence and must be
replaced and completed if the root predicate still depends on its work.  In
all cases the root-close rule directly validates the permitted completed trace,
all root obligations, and $Q_C$; A7 additionally ensures that a validated
$Q_{g_0}$ entails $Q_C$.  Therefore the terminal modeled state satisfies
$Q_C$.  The argument is conditional on closure and does not imply existence of
a closing trace or termination.
\end{proof}

\subsection{Proof of Lemma~\ref{lem:local-insufficiency}}

\begin{proof}
Let the effect alphabet contain $a$ and $b$, let the static allowlist be
$\{a,b\}$, and let the permitted prefix-closed language be
$L=\{\epsilon,a,b\}$.  Both singleton traces are safe and both effects are
individually listed, while $ab\notin L$.

Consider the decision for current effect $b$.  A control whose inputs are only
$b$ and the unchanged flat allowlist receives identical inputs after prefix
$\epsilon$ and after prefix $a$.  If it allows $b$, it preserves the safe
singleton $b$ but also admits forbidden trace $ab$.  If it denies $b$, it
blocks $ab$ but rejects the safe singleton $b$.  Hence it cannot satisfy both
safety and permissiveness on these traces without history or equivalent
composition state.
\end{proof}

\subsection{Proof of Proposition~\ref{thm:relative-completeness}}

\begin{proof}
Every loop in Algorithm~\ref{alg:exact-authorization} ranges over a finite
registered set: graph nodes and topological schedules, finite traces and their
prefixes, finite reachable-state sets, and finite typed constraint domains.
The checker therefore terminates.  Under the proposition's exact-domain
premises, it enumerates every scheduled trace in $K_\Delta$, every trace
prefix, every reachable modeled state, each structural and version equality,
the registered resource aggregates, obligation cases, and every closing state
in $D_\Delta$.

Semantic satisfaction of W1--W7 means that no exact structural test fails, no
concrete unsafe trace or state exists, all success and frame implications and
current-policy constraints hold, and all versions are current.  Such an edit
reaches the final return and yields $\Allow$; no unresolved abstract behavior
exists that could require $\Indeterminate$.  Conversely, reaching the final
return means that every finite check corresponding to W1--W7 succeeded, so the
edit satisfies those obligations.  A stale edit exits with $\Reauthorize$
before enumeration.  Exactness is essential: the statement makes no
completeness claim for a strict over-approximation.
\end{proof}

\subsection{Proof of Proposition~\ref{prop:precision}}

\begin{proof}
Suppose first that the universal $\mathsf{SafeFuture}_C$ check succeeds on
$A_1$.  Every trace and every prefix represented by $A_1$ satisfies the root
trace and forbidden-state predicates.  Since $A_2\subseteq A_1$, the same
universal statements hold on $A_2$, so the subcheck succeeds there as well.
Since $T\subseteq A_i$, either successful abstraction covers every concrete
continuation, proving concrete safety.

If denial is supported by concrete trace $u\in T$, soundness gives
$u\in A_2\subseteq A_1$.  The same retained concrete witness therefore remains
valid under both abstractions.  Finally, if there is no retained concrete
counterexample, removing spurious traces can eliminate the only obstacle to a
universal safety proof.  Set inclusion alone
cannot establish that a remaining abstract trace is concrete; therefore it
cannot create a concrete $\Deny$.  Such a denial requires additional
concretization evidence, a separate information gain rather than abstraction
precision alone.  The conclusion is limited to this universal subcheck; a
separate policy predicate that depends non-monotonically on the representation
of $K$ is not covered.
\end{proof}

\subsection{Bounded executable model}
\label{app:bounded-model}

The dependency-free Node.js model explores the reachable-state fixed point of
a declared finite abstraction containing create, same-root join, delegation,
version advance, invalidation, reauthorization, protected commit, and closure.
The selected complete profile reaches 340 states through 419 transitions, with
maximum shortest-path depth ten and no invariant violation.  An independently
coded static oracle agrees on all 13 initial registered proposal decisions and
their composed trace sets, including allowance of all four semantically
admissible proposals; it does not replay the full lifecycle.  Thirteen
single-defect variants---covering local-only composition,
order, multiplicity, frames, root identity, delegation lineage, resources,
obligations, success refinement, three freshness dimensions, and complete
mediation---are all killed by retained counterexamples.  The abstraction test
checks 256 nested language triples over a four-trace universe.  Prefix closure
is not needed for that set-inclusion regression, while
Proposition~\ref{prop:precision} retains prefix closure as a semantic premise.

These counts are regression assertions for the retained artifact.  They are
not evidence about unbounded programs, unrestricted natural language, or
multi-principal composition.  The audit can be reproduced with
\mcode{node} \path{formal/bounded_model.mjs}; full results and counterexamples are
stored separately so that a summary cannot conceal a failing variant.

\section{Registered Experimental Details}
\label{app:experimental-details}

\subsection{Case schema}

Each workflow record contains: a globally unique case identifier; source
family and source task identifier; license and immutable source reference; root
contract digest; initial typed state; identical prefix; benign and drifted goal
updates; ordered or partially ordered candidate effects; expected transition,
commit, and terminal-state outcomes; drift class; relevant policy/state
versions; and deterministic reset instructions.  The schema rejects unknown
security-relevant fields and requires every expected outcome to be derivable
from the registered state/effect semantics.

\subsection{Baseline interface}

All controls replay the same ordered case definition from a fresh local state,
but each receives only the projection that defines that control.  The
memoryless allowlist receives the normalized effect and root effect catalog;
the static goal-effect control additionally receives the proposed goal's
resources and declared effects; the declared-purpose control also binds the
current effect to that goal.  The final-state control receives the resulting
state and completed trace only after execution.  The history-aware
complete-continuation trace control receives the root trace language, effect
catalog and transition semantics, forbidden and required-terminal predicates,
current typed state, complete committed prefix, proposed completion traces,
and proposed goal success predicate.  The
complete mechanism alone receives typed lineage, obligations, versions,
resources, graph revisions, and transition witnesses.  This design compares
specified control classes rather than claiming identical information or a
reimplementation of each cited system.

\subsection{Run retention}

A run contributes a policy outcome only when the case and runtime versions
match the frozen registry, reset succeeds, all expected records are present,
receipt hashes and denominators verify, and both validators agree.  A crash,
malformed runtime output, version mismatch, reset failure, or validator
disagreement fails the evidence gate and receives no inferred policy outcome;
none occurred in the frozen run.

\subsection{Mutation categories}

The registered mutation set covers the trusted premises represented in the
implementation: effect mediation; root, parent, and worker identity; trace
order and multiplicity; resource scope; success and forbidden predicates;
active frame predicates; inherited obligations; residual completion; policy
and state versions; same-root composition; reservation-to-commit binding; and
idempotency.  Each mutant has a minimal retained counterexample and an expected
rejection reason.  Mutants that merely corrupt an outer hash do not count
toward semantic mutation coverage.

\subsection{Result generation}

All manuscript counts and percentages are produced from verified JSON result
bundles.  The generator checks the registered denominators, emits LaTeX macros
and tables, and records a digest over its inputs.  The manuscript build fails
when a generated file is stale.  Descriptive latency percentiles are derived
from monotonic-clock measurements after correctness gates pass; they are not
used to replace a failed security or utility gate.

\subsection{Secondary scale-study registry and execution}

The baseline non-join fixture has four maximal continuations of length four,
four resources, four obligations, four active-frame predicates, four closing
predicates, and four current-policy predicates.  One factor changes per G, D,
L, T, R, O, F, C, or P family.  Join families separately change source count,
precedence structure, or the combined trace-choice/output construction.  The
registry contains 100 non-join cells, 23 join cells, and six internal-boundary
cells.  G forms depth-one siblings and D forms one chain.  P ends at a study
ceiling of 256.  The Q fixtures use six sources and fixed precedence covers
whose expected raw topological-order counts are 720, 360, 90, 20, 6, and 1.

Every cell is evaluated in five process blocks.  Each block performs 10
warm-ups and retains 40 measurements, yielding
\GoalTransitionScaleWarmupExecutionCount{} warm-ups and
\GoalTransitionScaleMeasuredExecutionCount{} measured calls overall.  A fresh
runtime is used for every measured call; construction and setup are untimed.
One separate correctness execution and one isolated memory subprocess are
retained for each cell.  Cell order is deterministically shuffled from the
fixed seed independently in each block.  Quantiles use nearest rank,
$\max(0,\lceil pn\rceil-1)$, and no sample is removed.

The internal guards remain distinct.  The trace-choice construction reaches
4,096 source-trace choices and 4,096 raw compositions, then returns
\Deny{}/\mcode{E\_JOIN\_COMPOSITION} because the supplied one-continuation
proposal is not the complete composition; it is not a successful
4,096-continuation goal.  Topological-order enumeration is observed at 720 and
at the fixed 5,040-order over-limit construction, not at exactly 4,096.  The
largest successful A-family joined output has 256 normalized continuations;
the next fixed cell supplies 258 and fails closed without truncation.  The
precedence-edge cells accept 28 forward edges and reject a 29-entry input at
the direct length guard.

\subsection{Complete scale-study results}

Table~\ref{tab:scale-all-cells} lists every fixed cell.  Ratios are within-family
median latency ratios to the predeclared reference cell with the same decision
class; a dash denotes an inapplicable comparison.  The table is descriptive
single-host evidence and is not an asymptotic fit.

\begingroup
\scriptsize
\setlength{\tabcolsep}{3.1pt}
\begin{longtable}{@{}lllL{1.42in}rrrrr@{}}
\caption{Complete registered scale-sensitivity matrix (all 129 cells). Latencies are microseconds over 200 retained raw samples per cell. Ratio is the median relative to the registered same-decision family reference; it is omitted when no applicable same-decision reference exists or for over-limit fail-fast cells. State B is canonical retained-state JSON bytes after the measured operation.}\label{tab:scale-all-cells}\\
\toprule
Cell & Factor & Class & Decision/reason & Median & p95 & p99 & Ratio & State B \\
\midrule
\endfirsthead
\caption[]{Complete registered scale-sensitivity matrix (continued).}\\
\toprule
Cell & Factor & Class & Decision/reason & Median & p95 & p99 & Ratio & State B \\
\midrule
\endhead
\midrule
\multicolumn{9}{r}{\footnotesize Continued on next page}\\
\endfoot
\bottomrule
\endlastfoot
\mcode{G-002} & \mcode{2} & interior & \Allow & 168.958 & 208.750 & 255.042 & 0.99 & 6547 \\
\mcode{G-004} & \mcode{4} & interior & \Allow & 170.209 & 212.750 & 264.708 & 1.00 & 15885 \\
\mcode{G-008} & \mcode{8} & interior & \Allow & 168.041 & 220.417 & 273.416 & 0.99 & 34561 \\
\mcode{G-016} & \mcode{16} & interior & \Allow & 172.125 & 209.042 & 267.666 & 1.01 & 71920 \\
\mcode{G-032} & \mcode{32} & interior & \Allow & 168.250 & 204.083 & 321.250 & 0.99 & 146640 \\
\mcode{G-048} & \mcode{48} & interior & \Allow & 170.917 & 208.375 & 482.541 & 1.00 & 221360 \\
\mcode{G-063} & \mcode{63} & interior & \Allow & 170.916 & 208.125 & 245.208 & 1.00 & 291410 \\
\mcode{G-064} & \mcode{64} & at cap & \Allow & 171.958 & 214.917 & 249.416 & 1.01 & 296080 \\
\mcode{G-065} & \mcode{65} & cap+1 & \Indeterminate; \mcode{E\_PROFILE\_LIMIT} & 53.000 & 79.542 & 91.375 & --- & 296095 \\
\addlinespace[2pt]
\mcode{D-001} & \mcode{1} & interior & \Allow & 168.166 & 203.292 & 249.459 & 1.00 & 6547 \\
\mcode{D-002} & \mcode{2} & interior & \Allow & 168.666 & 196.584 & 225.416 & 1.00 & 11266 \\
\mcode{D-004} & \mcode{4} & interior & \Allow & 168.708 & 202.459 & 243.625 & 1.00 & 20704 \\
\mcode{D-008} & \mcode{8} & interior & \Allow & 169.833 & 200.458 & 395.542 & 1.01 & 39580 \\
\mcode{D-016} & \mcode{16} & interior & \Allow & 171.125 & 201.333 & 322.833 & 1.01 & 77340 \\
\mcode{D-032} & \mcode{32} & interior & \Allow & 174.375 & 223.292 & 258.958 & 1.03 & 152860 \\
\mcode{D-048} & \mcode{48} & interior & \Allow & 175.084 & 221.916 & 289.334 & 1.04 & 228380 \\
\mcode{D-062} & \mcode{62} & interior & \Allow & 174.709 & 224.541 & 483.833 & 1.04 & 294460 \\
\mcode{D-063} & \mcode{63} & at cap & \Allow & 177.166 & 228.375 & 263.750 & 1.05 & 299180 \\
\mcode{D-064} & \mcode{64} & cap+1 & \Indeterminate; \mcode{E\_PROFILE\_LIMIT} & 53.917 & 68.916 & 90.917 & --- & 299195 \\
\addlinespace[2pt]
\mcode{L-001} & \mcode{1} & interior & \Allow & 110.958 & 136.209 & 170.084 & 0.65 & 5047 \\
\mcode{L-002} & \mcode{2} & interior & \Allow & 131.583 & 150.667 & 172.500 & 0.77 & 5547 \\
\mcode{L-004} & \mcode{4} & interior & \Allow & 170.916 & 209.917 & 366.666 & 1.00 & 6547 \\
\mcode{L-008} & \mcode{8} & interior & \Allow & 262.625 & 394.500 & 480.625 & 1.54 & 8547 \\
\mcode{L-016} & \mcode{16} & interior & \Allow & 388.292 & 451.542 & 557.292 & 2.27 & 12547 \\
\mcode{L-032} & \mcode{32} & interior & \Allow & 678.542 & 808.584 & 931.625 & 3.97 & 20547 \\
\mcode{L-048} & \mcode{48} & interior & \Allow & 980.875 & 1205.750 & 1313.292 & 5.74 & 28547 \\
\mcode{L-063} & \mcode{63} & interior & \Allow & 1270.625 & 1485.083 & 1579.667 & 7.43 & 36047 \\
\mcode{L-064} & \mcode{64} & at cap & \Allow & 1272.583 & 1478.084 & 1548.167 & 7.45 & 36547 \\
\mcode{L-065} & \mcode{65} & cap+1 & \Indeterminate; \mcode{E\_PROFILE\_LIMIT} & 16.167 & 32.833 & 43.292 & --- & 7895 \\
\addlinespace[2pt]
\mcode{T-001} & \mcode{1} & interior & \Allow & 93.917 & 116.084 & 149.875 & 0.56 & 5017 \\
\mcode{T-002} & \mcode{2} & interior & \Allow & 119.167 & 142.875 & 177.333 & 0.71 & 5527 \\
\mcode{T-004} & \mcode{4} & interior & \Allow & 168.208 & 190.792 & 230.333 & 1.00 & 6547 \\
\mcode{T-008} & \mcode{8} & interior & \Allow & 272.750 & 324.709 & 493.375 & 1.62 & 8587 \\
\mcode{T-016} & \mcode{16} & interior & \Allow & 508.791 & 595.333 & 785.791 & 3.02 & 12667 \\
\mcode{T-032} & \mcode{32} & interior & \Allow & 1077.375 & 1245.708 & 1337.958 & 6.41 & 20827 \\
\mcode{T-064} & \mcode{64} & interior & \Allow & 2739.833 & 3166.542 & 3322.375 & 16.29 & 37147 \\
\mcode{T-128} & \mcode{128} & interior & \Allow & 7421.542 & 7909.542 & 8329.125 & 44.12 & 69787 \\
\mcode{T-192} & \mcode{192} & interior & \Allow & 14538.917 & 15174.291 & 15294.959 & 86.43 & 102427 \\
\mcode{T-255} & \mcode{255} & interior & \Allow & 23683.542 & 24625.958 & 25543.792 & 140.80 & 134557 \\
\mcode{T-256} & \mcode{256} & at cap & \Allow & 23514.667 & 24608.125 & 25565.791 & 139.80 & 135067 \\
\mcode{T-257} & \mcode{257} & cap+1 & \Indeterminate; \mcode{E\_PROFILE\_LIMIT} & 40.625 & 60.541 & 68.792 & --- & 27599 \\
\addlinespace[2pt]
\mcode{R-001} & \mcode{1} & interior & \Allow & 166.000 & 193.584 & 242.083 & 0.98 & 6358 \\
\mcode{R-002} & \mcode{2} & interior & \Allow & 169.083 & 193.583 & 222.625 & 1.00 & 6421 \\
\mcode{R-004} & \mcode{4} & interior & \Allow & 169.084 & 202.542 & 253.625 & 1.00 & 6547 \\
\mcode{R-008} & \mcode{8} & interior & \Allow & 169.917 & 219.250 & 240.834 & 1.00 & 6799 \\
\mcode{R-016} & \mcode{16} & interior & \Allow & 175.167 & 209.333 & 271.334 & 1.04 & 7303 \\
\mcode{R-032} & \mcode{32} & interior & \Allow & 176.292 & 198.583 & 230.917 & 1.04 & 8311 \\
\mcode{R-064} & \mcode{64} & interior & \Allow & 191.958 & 225.084 & 259.417 & 1.14 & 10327 \\
\mcode{R-128} & \mcode{128} & interior & \Allow & 214.708 & 256.833 & 390.792 & 1.27 & 14359 \\
\mcode{R-192} & \mcode{192} & interior & \Allow & 236.500 & 273.667 & 290.625 & 1.40 & 18391 \\
\mcode{R-255} & \mcode{255} & interior & \Allow & 260.250 & 312.125 & 509.125 & 1.54 & 22360 \\
\mcode{R-256} & \mcode{256} & at cap & \Allow & 258.666 & 323.125 & 410.958 & 1.53 & 22423 \\
\mcode{R-257} & \mcode{257} & cap+1 & \Indeterminate; \mcode{E\_PROFILE\_LIMIT} & 12.375 & 24.208 & 44.833 & --- & 7187 \\
\addlinespace[2pt]
\mcode{O-001} & \mcode{1} & interior & \Allow & 166.042 & 195.375 & 375.625 & 0.97 & 6340 \\
\mcode{O-002} & \mcode{2} & interior & \Allow & 169.542 & 201.875 & 314.458 & 0.99 & 6409 \\
\mcode{O-004} & \mcode{4} & interior & \Allow & 170.833 & 196.625 & 219.958 & 1.00 & 6547 \\
\mcode{O-008} & \mcode{8} & interior & \Allow & 167.333 & 203.292 & 334.959 & 0.98 & 6823 \\
\mcode{O-016} & \mcode{16} & interior & \Allow & 177.209 & 196.583 & 244.375 & 1.04 & 7375 \\
\mcode{O-032} & \mcode{32} & interior & \Allow & 185.125 & 224.042 & 264.458 & 1.08 & 8479 \\
\mcode{O-064} & \mcode{64} & interior & \Allow & 228.500 & 262.584 & 308.709 & 1.34 & 10687 \\
\mcode{O-128} & \mcode{128} & interior & \Allow & 371.833 & 455.584 & 505.959 & 2.18 & 15103 \\
\mcode{O-192} & \mcode{192} & interior & \Allow & 588.459 & 671.417 & 731.292 & 3.44 & 19519 \\
\mcode{O-255} & \mcode{255} & interior & \Allow & 881.375 & 980.167 & 1041.000 & 5.16 & 23866 \\
\mcode{O-256} & \mcode{256} & at cap & \Allow & 882.917 & 1014.750 & 1066.125 & 5.17 & 23935 \\
\mcode{O-257} & \mcode{257} & cap+1 & \Indeterminate; \mcode{E\_PROFILE\_LIMIT} & 13.917 & 35.083 & 70.417 & --- & 7691 \\
\addlinespace[2pt]
\mcode{F-001} & \mcode{1} & interior & \Allow & 146.917 & 173.458 & 227.833 & 0.87 & 6043 \\
\mcode{F-002} & \mcode{2} & interior & \Allow & 156.375 & 179.625 & 220.208 & 0.93 & 6211 \\
\mcode{F-004} & \mcode{4} & interior & \Allow & 168.750 & 196.917 & 279.084 & 1.00 & 6547 \\
\mcode{F-008} & \mcode{8} & interior & \Allow & 201.500 & 233.584 & 294.958 & 1.19 & 7219 \\
\mcode{F-016} & \mcode{16} & interior & \Allow & 299.209 & 349.792 & 411.875 & 1.77 & 8563 \\
\mcode{F-032} & \mcode{32} & interior & \Allow & 636.208 & 771.625 & 889.959 & 3.77 & 11251 \\
\mcode{F-064} & \mcode{64} & interior & \Allow & 1845.292 & 2068.708 & 2166.750 & 10.94 & 16627 \\
\mcode{F-128} & \mcode{128} & interior & \Allow & 6563.958 & 6854.166 & 6912.333 & 38.90 & 27379 \\
\mcode{F-192} & \mcode{192} & interior & \Allow & 14198.083 & 14903.833 & 15450.750 & 84.14 & 38131 \\
\mcode{F-255} & \mcode{255} & interior & \Allow & 24452.250 & 25831.958 & 26338.125 & 144.90 & 48715 \\
\mcode{F-256} & \mcode{256} & at cap & \Allow & 24723.959 & 26357.166 & 26739.833 & 146.51 & 48883 \\
\mcode{F-257} & \mcode{257} & cap+1 & \Indeterminate; \mcode{E\_PROFILE\_LIMIT} & 16.667 & 38.125 & 47.000 & --- & 16007 \\
\addlinespace[2pt]
\mcode{C-001} & \mcode{1} & interior & \Allow & 164.667 & 214.584 & 299.958 & 0.98 & 6403 \\
\mcode{C-002} & \mcode{2} & interior & \Allow & 163.417 & 190.792 & 263.209 & 0.97 & 6451 \\
\mcode{C-004} & \mcode{4} & interior & \Allow & 168.125 & 199.084 & 250.083 & 1.00 & 6547 \\
\mcode{C-008} & \mcode{8} & interior & \Allow & 179.417 & 220.542 & 345.583 & 1.07 & 6739 \\
\mcode{C-016} & \mcode{16} & interior & \Allow & 184.750 & 224.917 & 251.208 & 1.10 & 7123 \\
\mcode{C-032} & \mcode{32} & interior & \Allow & 206.958 & 234.667 & 273.917 & 1.23 & 7891 \\
\mcode{C-064} & \mcode{64} & interior & \Allow & 250.000 & 295.333 & 370.375 & 1.49 & 9427 \\
\mcode{C-128} & \mcode{128} & interior & \Allow & 336.625 & 388.750 & 429.667 & 2.00 & 12499 \\
\mcode{C-192} & \mcode{192} & interior & \Allow & 418.792 & 479.833 & 536.167 & 2.49 & 15571 \\
\mcode{C-255} & \mcode{255} & interior & \Allow & 508.583 & 594.291 & 722.208 & 3.03 & 18595 \\
\mcode{C-256} & \mcode{256} & at cap & \Allow & 503.084 & 576.625 & 706.916 & 2.99 & 18643 \\
\mcode{C-257} & \mcode{257} & cap+1 & \Indeterminate; \mcode{E\_PROFILE\_LIMIT} & 12.750 & 31.667 & 43.292 & --- & 1895 \\
\addlinespace[2pt]
\mcode{P-000} & \mcode{0} & interior & \Allow & 164.750 & 191.375 & 227.250 & 0.97 & 6328 \\
\mcode{P-001} & \mcode{1} & interior & \Allow & 163.708 & 196.250 & 334.416 & 0.96 & 6382 \\
\mcode{P-002} & \mcode{2} & interior & \Allow & 164.375 & 199.625 & 254.500 & 0.97 & 6437 \\
\mcode{P-004} & \mcode{4} & interior & \Allow & 169.792 & 208.625 & 282.875 & 1.00 & 6547 \\
\mcode{P-008} & \mcode{8} & interior & \Allow & 172.167 & 209.417 & 268.625 & 1.01 & 6767 \\
\mcode{P-016} & \mcode{16} & interior & \Allow & 183.625 & 210.625 & 279.666 & 1.08 & 7207 \\
\mcode{P-032} & \mcode{32} & interior & \Allow & 195.834 & 232.833 & 295.667 & 1.15 & 8087 \\
\mcode{P-064} & \mcode{64} & interior & \Allow & 225.958 & 260.333 & 422.958 & 1.33 & 9847 \\
\mcode{P-128} & \mcode{128} & interior & \Allow & 286.500 & 331.375 & 570.417 & 1.69 & 13367 \\
\mcode{P-192} & \mcode{192} & interior & \Allow & 349.583 & 452.041 & 586.250 & 2.06 & 16887 \\
\mcode{P-256} & \mcode{256} & study ceiling & \Allow & 414.875 & 611.833 & 680.875 & 2.44 & 20407 \\
\addlinespace[2pt]
\mcode{J-002} & \mcode{2} & interior & \Allow & 96.041 & 130.542 & 166.125 & 0.82 & 11294 \\
\mcode{J-003} & \mcode{3} & interior & \Allow & 113.167 & 177.333 & 222.792 & 0.96 & 14866 \\
\mcode{J-004} & \mcode{4} & interior & \Allow & 117.584 & 142.083 & 164.916 & 1.00 & 18526 \\
\mcode{J-005} & \mcode{5} & interior & \Allow & 127.708 & 160.666 & 197.292 & 1.09 & 22274 \\
\mcode{J-006} & \mcode{6} & interior & \Allow & 137.917 & 170.584 & 194.750 & 1.17 & 26110 \\
\mcode{J-007} & \mcode{7} & interior & \Allow & 150.417 & 202.583 & 235.875 & 1.28 & 30034 \\
\mcode{J-008} & \mcode{8} & at cap & \Allow & 161.125 & 207.958 & 246.167 & 1.37 & 34046 \\
\mcode{J-009} & \mcode{9} & cap+1 & \Indeterminate; \mcode{E\_PROFILE\_LIMIT} & 11.584 & 21.250 & 33.833 & --- & 29231 \\
\addlinespace[2pt]
\mcode{Q-Q0} & \mcode{Q0} & interior & \Allow & 1266.125 & 1559.541 & 1801.042 & 9.29 & 25206 \\
\mcode{Q-Q1} & \mcode{Q1} & interior & \Allow & 719.250 & 940.583 & 1085.875 & 5.28 & 25300 \\
\mcode{Q-Q2} & \mcode{Q2} & interior & \Allow & 306.583 & 392.667 & 514.625 & 2.25 & 25492 \\
\mcode{Q-Q3} & \mcode{Q3} & interior & \Allow & 187.958 & 235.625 & 270.542 & 1.38 & 25588 \\
\mcode{Q-Q4} & \mcode{Q4} & interior & \Allow & 154.375 & 215.916 & 268.500 & 1.13 & 25588 \\
\mcode{Q-Q5} & \mcode{Q5} & interior & \Allow & 136.333 & 174.458 & 210.958 & 1.00 & 25684 \\
\addlinespace[2pt]
\mcode{A-001} & \mcode{1} & interior & \Allow & 110.083 & 139.958 & 198.250 & 0.53 & 11570 \\
\mcode{A-002} & \mcode{2} & interior & \Allow & 141.375 & 177.417 & 230.875 & 0.68 & 12408 \\
\mcode{A-004} & \mcode{4} & interior & \Allow & 207.208 & 261.125 & 296.916 & 1.00 & 14084 \\
\mcode{A-008} & \mcode{8} & interior & \Allow & 361.584 & 453.083 & 558.375 & 1.75 & 17436 \\
\mcode{A-016} & \mcode{16} & interior & \Allow & 713.708 & 830.958 & 933.417 & 3.44 & 24140 \\
\mcode{A-032} & \mcode{32} & interior & \Allow & 1652.500 & 1888.833 & 2683.375 & 7.98 & 37548 \\
\mcode{A-064} & \mcode{64} & interior & \Allow & 4605.125 & 4967.000 & 5153.875 & 22.22 & 64364 \\
\mcode{A-128} & \mcode{128} & at cap & \Allow & 14091.709 & 15044.250 & 15441.500 & 68.01 & 117996 \\
\mcode{A-129} & \mcode{129} & cap+1 & \Indeterminate; \mcode{E\_PROFILE\_LIMIT} & 35.000 & 49.417 & 63.708 & --- & 28499 \\
\addlinespace[2pt]
\mcode{ITC-EXACT} & \mcode{ITC-EXACT} & at cap & \Deny; \mcode{E\_JOIN\_COMPOSITION} & 11533.292 & 13092.250 & 13952.334 & --- & 33719 \\
\mcode{ITC-OVER} & \mcode{ITC-OVER} & over guard & \Indeterminate; \mcode{E\_PROFILE\_LIMIT} & 293.583 & 355.500 & 575.709 & --- & 33473 \\
\addlinespace[2pt]
\mcode{ITO-OVER} & \mcode{ITO-OVER} & over guard & \Indeterminate; \mcode{E\_PROFILE\_LIMIT} & 3236.542 & 3667.916 & 4161.000 & --- & 26652 \\
\mcode{ITO-BELOW} & \mcode{ITO-BELOW} & below guard & \Allow & 1340.875 & 1675.125 & 1742.041 & --- & 27236 \\
\addlinespace[2pt]
\mcode{IPE-EXACT} & \mcode{IPE-EXACT} & at cap & \Allow & 217.750 & 261.791 & 284.042 & --- & 38810 \\
\mcode{IPE-OVER} & \mcode{IPE-OVER} & over guard & \Indeterminate; \mcode{E\_PROFILE\_LIMIT} & 14.125 & 30.750 & 40.833 & --- & 30638 \\
\end{longtable}
\noindent\footnotesize \textit{Memory interpretation.} One isolated subprocess snapshot was retained per cell. Heap/RSS deltas compare pre-call, immediate post-call, and post-GC snapshots and may be negative; they are not peak-allocation measurements. Process \mcode{maxRSS} is a process-lifetime high-water mark that includes fixture setup.
\endgroup

\end{document}